\def\Figref#1{Figure~\ref{#1}}
\def\Secref#1{Section~\ref{#1}}
\def\eqref#1{equation~\ref{#1}}
\def\Eqref#1{Equation~\ref{#1}}
\def\1{\bm{1}}
\DeclareMathAlphabet{\mathsfit}{\encodingdefault}{\sfdefault}{m}{sl}
\SetMathAlphabet{\mathsfit}{bold}{\encodingdefault}{\sfdefault}{bx}{n}
\def\gX{{\mathcal{X}}}
\newcommand{\E}{\mathbb{E}}
\newcommand{\R}{\mathbb{R}}
\DeclareMathOperator*{\argmin}{arg\,min}
\def\sqrtexplained#1{%
  \begingroup
    \sbox0{$#1$}
    \def\underbrace##1_##2{##1}
    \sbox2{$#1$}
    \dimen0=\wd0 \advance\dimen0-\wd2
    \mathrlap{\sqrt{\phantom{\displaystyle#1}\kern\dimen0 }}
    \hphantom{\sqrt{\vphantom{\displaystyle#1}}}
  \endgroup
  #1}
\def\mdp{\mathcal{M}}
\def\pitarget{\pi_*}
\def\visitpi{d^\pi}
\def\visittarget{d^{\pitarget}}
\def\visitrb{d^\mathrm{off}}
\def\Sset{S}
\def\Aset{A}
\def\Zset{Z}
\def\Dset{\mathcal{D}}
\def\Demos{\mathcal{D}^{\pitarget}}
\def\Doff{\mathcal{D}^\mathrm{off}}
\def\Reward{\mathcal{R}}
\def\Trans{\mathcal{T}}
\def\init{\mu}
\def\defeq{:=}
\renewcommand{\widehat}{\hat}
\def\opt{\mathcal{OPT}}
\def\sopt{opt}
\def\Tmodel{\overline{\mathcal{T}}}
\def\Trep{\mathcal{T}_\Zset}
\def\pirep{\pi_{\Zset}}
\def\pilrep{\pi_\theta}
\def\pidec{\pi_{\alpha}}
\def\jbc{J_\mathrm{BC}}
\def\jbcrep{J_{\mathrm{BC},\phi}}
\def\jbcdec{J_{\mathrm{DE}}}
\def\jtrans{J_\mathrm{T}}
\def\Diff{\mathrm{Diff}}
\def\err{\mathrm{Err}}
\def\dtv{D_\mathrm{TV}}
\def\dkl{D_\mathrm{KL}}
\def\dchi{D_\mathrm{\chi^2}}
\def\const{\mathrm{const}}
\def\uniform{\mathrm{Unif}_{\Aset}}
\newtheorem{theorem}{Theorem}
\newtheorem{lemma}[theorem]{Lemma}
\def\method{TRAIL\xspace}
\title{TRAIL: Near-Optimal Imitation Learning with Suboptimal Data}  
\author{%
  Mengjiao Yang \\ UC Berkeley, Google Brain \\ \texttt{sherryy@google.com}
  \And Sergey Levine \\ UC Berkeley, Google Brain
  \And Ofir Nachum \\ Google Brain
}
\begin{document}

\maketitle

\vspace{-3em}
\begin{abstract}
The aim in imitation learning is to learn effective policies by utilizing near-optimal expert demonstrations. However, high-quality demonstrations from human experts can be expensive to obtain in large number. On the other hand, it is often much easier to obtain large quantities of suboptimal or task-agnostic trajectories, which are not useful for direct imitation, but can nevertheless provide insight into the dynamical structure of the environment, showing what \emph{could} be done in the environment even if not what \emph{should} be done. We ask the question, is it possible to utilize such suboptimal offline datasets to facilitate \emph{provably} improved downstream imitation learning? In this work, we answer this question affirmatively and present training objectives that use offline datasets to learn a \emph{factored} transition model whose structure enables the extraction of a \emph{latent action space}. Our theoretical analysis shows that the learned latent action space can boost the sample-efficiency of downstream imitation learning, effectively reducing the need for large near-optimal expert datasets through the use of auxiliary non-expert data. To learn the latent action space in practice, we propose \method (Transition-Reparametrized Actions for Imitation Learning), an algorithm that learns an energy-based transition model contrastively, and uses the transition model to reparametrize the action space for sample-efficient imitation learning. We evaluate the practicality of our objective through experiments on a set of navigation and locomotion tasks. Our results verify the benefits suggested by our theory and show that \method is able to improve baseline imitation learning by up to 4x in performance.\footnote{Find experimental code at~\url{https://github.com/google-research/google-research/tree/master/rl_repr}.}
\end{abstract}

\setlength{\abovedisplayskip}{2pt}
\setlength{\abovedisplayshortskip}{2pt}
\setlength{\belowdisplayskip}{2pt}
\setlength{\belowdisplayshortskip}{2pt}
\setlength{\jot}{2pt}
\setlength{\floatsep}{2ex}
\setlength{\textfloatsep}{2ex}
\setlength{\parskip}{0.1em}
\titlespacing\section{0pt}{10pt plus 2pt minus 2pt}{0pt plus 2pt minus 2pt}
\titlespacing\subsection{2pt}{10pt plus 2pt minus 2pt}{2pt plus 2pt minus 2pt}

\section{Introduction}\label{sec:intro}

Imitation learning uses expert demonstration data to learn sequential decision making policies~\citep{schaal1999imitation}. Such demonstrations, often produced by human experts, can be costly to obtain in large number. On the other hand, practical application domains, such as recommendation~\citep{afsar2021reinforcement} and dialogue~\citep{jiang2021towards} systems, provide large quantities of offline data generated by suboptimal agents. Since the offline data is suboptimal in performance, using it directly for imitation learning is infeasible. While some prior works have proposed using suboptimal offline data for offline reinforcement learning (RL) ~\citep{kumar2019stabilizing,wu2019behavior,levine2020offline}, this would require reward information, which may be unavailable or infeasible to compute from suboptimal data~\citep{abbeel2004apprenticeship}.
Nevertheless, conceptually, suboptimal offline datasets should contain useful information about the environment, if only we could distill that information into a useful form that can aid downstream imitation learning.

One approach to leveraging suboptimal offline
datasets is to use the offline data to extract a lower-dimensional \emph{latent action space}, and then perform imitation learning on an expert dataset using this latent action space. If the latent action space is learned properly, one may hope that performing imitation learning in the latent space can reduce the need for large quantities of expert data.
While a number of prior works have studied similar approaches in the context of hierarchical imitation and RL setting~\citep{parr1998reinforcement,dietterich1998maxq,sutton1999between,kulkarni2016hierarchical,vezhnevets2017feudal,nachum2018data,ajay2020opal,pertsch2020accelerating,hakhamaneshi2021hierarchical}, such methods typically focus on the theoretical and practical benefits of \emph{temporal abstraction} by extracting temporally extended skills from data or experience. That is, the main benefit of these approaches is that the latent action space operates at a lower temporal frequency than the original environment action space. We instead focus directly on the question of \emph{action representation}: instead of learning skills that provide for temporal abstraction, we aim to directly reparameterize the action space in a way that provides for more sample-efficient downstream imitation without the need to reduce control frequency. Unlike learning temporal abstractions, action reparamtrization does not have to rely on any hierarchical structures in the offline data, and can therefore utilize highly suboptimal datasets (e.g., with random actions).
Aiming for a provably-efficient approach to utilizing highly suboptimal offline datasets, we use first principles to derive an upper bound on the quality of an imitation learned policy involving three terms corresponding to (\ref{eq:tabular-rep}) action representation and (\ref{eq:tabular-dec}) action decoder learning on a suboptimal offline dataset, and finally, (\ref{eq:tabular-bc}) behavioral cloning (i.e., max-likelihood learning of latent actions) on an expert demonstration dataset. 
The first term in our bound immediately suggests a practical offline training objective based on a transition dynamics loss using an \emph{factored} transition model. We show that under specific factorizations (e.g., low-dimensional or linear), one can guarantee improved sample efficiency on the expert dataset. Crucially, our mathematical results avoid the potential shortcomings of temporal skill extraction, as our bound is guaranteed to hold even when there is no temporal abstraction in the latent action space.

\begin{figure}[t]
\centering
 \includegraphics[width=0.9\linewidth]{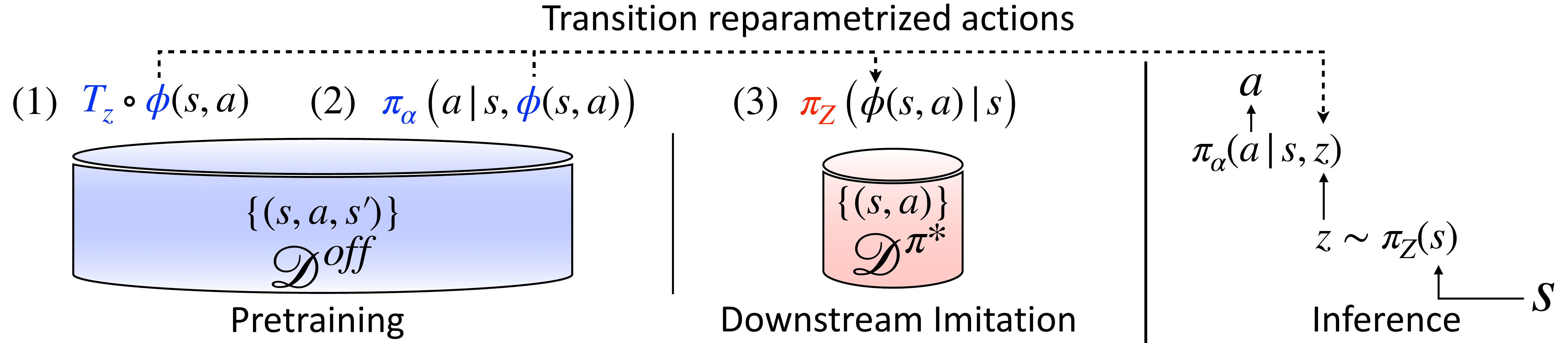}
 \vspace{-2mm}
 \caption{The TRAIL framework. Pretraining learns a factored transition model $\Trep\circ\phi$ and an action decoder $\pidec$ on $\Doff$. Downstream imitation learns a latent policy $\pirep$ on $\Demos$ with expert actions reparametrized by $\phi$. During inference, $\pirep$ and $\pidec$ are combined to sample an action.}
 \label{fig:framework}
 \vspace{-3mm}
\end{figure}

We translate these mathematical results into an algorithm that we call \emph{Transition-Reparametrized Actions for Imitation Learning} (\method). As shown in~\Figref{fig:framework}, \method consists of a pretraining stage (corresponding to the first two terms in our bound) and a downstream imitation learning stage (corresponding to the last term in our bound). During the pretraining stage, \method uses an offline dataset to learn a factored transition model and a paired action decoder. During the downstream imitation learning stage, \method first reparametrizes expert actions into the latent action space according to the learned transition model, and then learns a latent policy via behavioral cloning in the latent action space. During inference, \method uses the imitation learned latent policy and action decoder in conjunction to act in the environment. In practice, \method parametrizes the transition model as an energy-based model (EBM) for flexibility and trains the EBM with a contrastive loss. The EBM enables the low-dimensional factored transition model referenced by our theory, and we also show that one can recover the \emph{linear} transition model in our theory by approximating the EBM with random Fourier features~\citep{rahimi2007random}.


To summarize, our contributions include (i) a provably beneficial objective for learning action representations without temporal abstraction and (ii) a practical algorithm for optimizing the proposed objective by learning an EBM or linear transition model.
An extensive evaluation on a set of navigation and locomotion tasks demonstrates the effectiveness of the proposed objective.
\method's empirical success compared to a variety of existing methods suggests that the benefit of learning \emph{single-step} action representations has been overlooked by previous temporal skill extraction methods. Additionally, \method significantly improves  behavioral cloning even when the offline dataset is unimodal or highly suboptimal (e.g., obtained from a random policy), whereas temporal skill extraction methods lead to \emph{degraded} performance in these scenarios. Lastly, we show that \method, without using reward labels, can perform similarly or better than offline reinforcement learning (RL) with orders of magnitude less expert data, suggesting new ways for offline learning of squential decision making policies.
\section{Related Work}
Learning action abstractions is a long standing topic in the hierarchical RL literature~\citep{parr1998reinforcement,dietterich1998maxq,sutton1999between,kulkarni2016hierarchical,nachum2018data}. A large body of work focusing on \emph{online skill discovery} have been proposed as a means to improve exploration and sample complexity in online RL. For instance, \citet{eysenbach2018diversity,sharma2019dynamics,gregor2016variational,warde2018unsupervised,liu2021learn} propose to learn a diverse set of skills by maximizing an information theoretic objective. Online skill discovery is also commonly seen in a hierarchical framework that learns a continuous space~\citep{vezhnevets2017feudal,hausman2018learning,nachum2018data,nachum2019multi} or a discrete set of lower-level policies~\citep{bacon2017option,stolle2002learning,peng2019mcp}, upon which higher-level policies are trained to solve specific tasks. Different from these works, we focus on learning action representations \emph{offline} from a fixed suboptimal dataset to accelerate imitation learning.

Aside from online skill discovery, \emph{offline skill extraction} focuses on learning temporally extended action abstractions from a fixed offline dataset. Methods for offline skill extraction generally involve maximum likelihood training of some latent variable models on the offline data, followed by downstream planning~\citep{lynch2020learning}, imitation learning~\citep{kipf2019compile,ajay2020opal,hakhamaneshi2021hierarchical}, offline RL~\citep{ajay2020opal}, or online RL~\citep{fox2017multi,krishnan2017ddco,shankar2020learning,shankar2019discovering,singh2020parrot,pertsch2020accelerating,pertsch2021guided,wang2021skill} in the induced latent action space. Among these works, those that provide a theoretical analysis attribute the benefit of skill extraction predominantly to increased temporal abstraction as opposed to the learned action space being any ``easier'' to learn from than the raw action space~\citep{ajay2020opal,nachum2018near}. Unlike these methods, our analysis focuses on the advantage of a lower-dimensional reparametrized action space agnostic to temporal abstraction. Our method also applies to offline data that is highly suboptimal (e.g., contains random actions) and potentially unimodal (e.g., without diverse skills to be extracted),
which have been considered challenging by previous work~\citep{ajay2020opal}. 

While we focus on reducing the complexity of the action space through the lens of action representation learning,
there exists a disjoint set of work that focuses on accelerating RL with \emph{state} representation learning~\citep{singh1995reinforcement,ren2002state,castro2010using,gelada2019deepmdp,zhang2020learning,arora2020provable,nachum2021provable}, some of which have proposed to extract a latent state space from a learned dynamics model. Analogous to our own derivations, these works attribute the benefit of representation learning to a smaller latent state space reduced from a high-dimensional input  state space (e.g., images). 


\section{Preliminaries}
In this section, we introduce the problem statements for imitation learning and learning-based control, and define relevant notations.

\paragraph{Markov decision process.} Consider an MDP~\citep{puterman1994markov} $\mdp\defeq \langle\Sset, \Aset, \Reward, \Trans, \init, \gamma\rangle$, consisting of a state space $\Sset$, an action space $\Aset$, a reward function $\Reward:\Sset\times\Aset\to\R$, a transition function $\Trans:\Sset\times\Aset\to\Delta(\Sset)$\footnote{$\Delta(\gX)$ denotes the simplex over a set $\gX$.}, an initial state distribution $\mu\in\Delta(\Sset)$, and a discount factor $\gamma \in [0, 1)$
A policy $\pi:\Sset\to\Delta(\Aset)$ interacts with the environment starting at an initial state $s_0 \sim \init$. An action $a_t\sim\pi(s_t)$ is sampled and applied to the environment at each step $t \ge 0$. The environment produces a scalar reward $\Reward(s_t,a_t)$
and transitions into the next state $s_{t+1}\sim\Trans(s_t,a_t)$. Note that we are specifically interested in the imitation learning setting, where the rewards produced by $\Reward$ are unobserved by the learner.
The state visitation distribution $\visitpi(s)$ induced by a policy $\pi$ is defined as $\visitpi(s) := (1-\gamma)\sum_{t=0}^\infty\gamma^t\cdot\Pr\left[s_t=s|\pi,\mdp\right]$. We relax the notation and use $(s,a)\sim d^\pi$ to denote $s\sim d^\pi, a\sim\pi(s)$.


\paragraph{Learning goal.} Imitation learning aims to recover an \emph{expert policy} $\pitarget$ with access to only a fixed set of samples from the expert:
$\Demos=\{(s_i,a_i)\}_{i=1}^n$ with $s_i\sim d^\pitarget$ and $a_i\sim\pitarget(s_i)$. One approach to imitation learning is to learn a policy $\pi$ that minimizes some discrepancy between $\pi$ and $\pitarget$. In our analysis, we will use the total variation (TV) divergence in state visitation distributions,
\begin{equation}
    \Diff(\pi,\pitarget) = \dtv(d^\pi\|d^\pitarget),\nonumber
\end{equation}
as the way to measure the discrepancy between $\pi$ and $\pitarget$. Our bounds can be easily modified to apply to other divergence measures such as the Kullback–Leibler (KL) divergence or difference in expected future returns. \emph{Behavioral cloning} (BC)~\citep{pomerleau1989alvinn} solves the imitation learning problem by learning $\pi$ from $\Demos$ via a maximum likelihood objective
\begin{equation}
    \jbc(\pi) \defeq \E_{(s,a)\sim(\visittarget,\pitarget)}[-\log\pi(a|s)],\nonumber
\end{equation}
which optimizes an upper bound of $\Diff(\pi,\pitarget)$ defined above~\citep{ross2010efficient,nachum2021provable}:
\begin{equation}
\small
   \Diff(\pi, \pitarget) \le  \frac{\gamma}{1-\gamma}\sqrt{\frac{1}{2}\E_{\visittarget}[\dkl(\pitarget(s)\|\pi(s))]} =\frac{\gamma}{1-\gamma}\sqrt{\const(\pitarget) + \frac{1}{2}\jbc(\pi)}.\nonumber
\end{equation}
\paragraph{BC with suboptimal offline data.} The standard BC objective (i.e., direct max-likelihood on $\Demos$) can struggle to attain good performance when the amount of expert demonstrations is limited~\citep{ross2011reduction,tu2021closing}. We assume access to an additional \emph{suboptimal} offline dataset
$\Doff=\{(s_i,a_i,s_i^\prime)\}_{i=1}^m$, where the suboptimality is a result of (i) suboptimal action samples $a_i\sim \uniform$
and (ii) lack of reward labels. We use $(s,a,s')\sim\visitrb$ as a shorthand for simulating finite sampling from $\Doff$ via $s_i\sim\visitrb, a_i\sim \uniform,s_i'\sim\Trans(s_i,a_i),$ where $\visitrb$ is an \emph{unknown} offline state distribution. We assume $\visitrb$ sufficiently covers the expert distribution; i.e., $\visittarget(s) > 0 \Rightarrow \visitrb(s) > 0$ for all $s\in S$.
The uniform sampling of actions in $\Doff$ is largely for mathematical convenience, and in theory can be replaced with any distribution uniformly bounded from below by $\eta>0$, and our derived bounds will be scaled by $\frac{1}{|\Aset| \eta}$ as a result. This works focuses on how to utilize such a suboptimal $\Doff$ to provably accelerate BC.



\section{Near-Optimal Imitation Learning with Reparametrized Actions}

In this section, we provide a provably-efficient objective for learning action representations from suboptimal data. Our initial derivations (Theorem~\ref{thm:tabular}) apply to general policies and latent action spaces, while our subsequent result (Theorem~\ref{thm:linear}) provides improved bounds for specialized settings with continuous latent action spaces. Finally, we present our practical method \method for action representation learning and downstream imitation learning.


\subsection{Performance Bound with Reparametrized Actions}


Despite $\Doff$ being highly suboptimal (e.g., with random actions), the large set of $(s,a,s')$ tuples from $\Doff$ reveals the transition dynamics of the environment, which a latent action space should support. Under this motivation, we propose to learn a \emph{factored} transition model $\Tmodel\defeq \Trep \circ \phi$ from the offline dataset $\Doff$, where $\phi:\Sset\times\Aset\to\Zset$ is an action representaiton function and $\Trep:\Sset\times\Zset\to\Delta(\Sset)$ is a latent transition model. Intuitively, good action representations should enable good imitation learning.

We formalize this intuition in the theorem below by establishing a bound on the quality of a learned policy based on (\ref{eq:tabular-rep}) an offline pretraining objective for learning $\phi$ and $\Trep$, (\ref{eq:tabular-dec}) an offline decoding objective for learning an action decoder $\pidec$, and (\ref{eq:tabular-bc}) a downstream imitation learning objective for learning a latent policy $\pi_Z$ with respect to latent actions determined by $\phi$.

\begin{theorem}
\label{thm:tabular}
Consider an action representation function $\phi:\Sset\times\Aset\to\Zset$, a factored transition model $\Trep:\Sset\times\Zset\to\Delta(\Sset)$, an action decoder $\pidec:\Sset\times\Zset\to\Delta(\Aset)$, and a tabular latent policy $\pirep:\Sset\to\Delta(\Zset)$. Define the transition representation error as
\begin{align}
    \jtrans(\Trep, \phi) &\defeq \E_{(s,a)\sim\visitrb}\left[\dkl(\Trans(s,a)\|\Trep(s, \phi(s,a)))\right],\nonumber
\end{align}
the action decoding error as 
\begin{equation}
    \jbcdec(\pidec,\phi) \defeq \E_{(s,a)\sim\visitrb}[-\log\pidec(a|s, \phi(s, a))],\nonumber
\end{equation}
and the latent behavioral cloning error as 
\begin{equation*}
        \jbcrep(\pirep) \defeq \E_{(s,a)\sim(\visittarget,\pitarget)}[-\log\pirep(\phi(s,a)|s)].\nonumber
\end{equation*}
Then the TV divergence between the state visitation distributions of $\pidec\circ\pirep:\Sset\to\Delta(\Aset)$ and $\pitarget$ can be bounded as
\begin{minipage}{\textwidth}
\begin{center}
\begin{equation*}
    \hspace{-25em}\Diff(\pidec\circ\pirep,\pitarget) \leq
\end{equation*}
\begin{empheq}[left={\text{Pretraining}\empheqlbrace}]{align}
& C_1 \cdot\sqrtexplained{\frac{1}{2}\underbrace{\E_{(s,a)\sim\visitrb}\left[\dkl(\Trans(s,a)\|\Trep(s, \phi(s,a)))\right]}_{\displaystyle={\color{blue}\jtrans(\Trep, \phi)}}}\label{eq:tabular-rep}
\\
+& C_2 \cdot\sqrtexplained{\frac{1}{2}\underbrace{\E_{s\sim\visitrb}[
\max_{z\in\Zset}
\dkl(\pi_{\alpha^*}(s,z)\|\pidec(s,z))]}_{\displaystyle\approx~\const(\visitrb,\phi) + {\color{blue}\jbcdec(\pidec,\phi)}}} \label{eq:tabular-dec}
\end{empheq}
\begin{empheq}[left={\hspace{-1.8cm}\parbox{1.8cm}{\text{Downstream} \\ \text{Imitation}}\empheqlbrace}]{align}
+& C_3\cdot\sqrtexplained{\frac{1}{2}\underbrace{\E_{s\sim \visittarget}[\dkl(\pi_{*,Z}(s)\|\pirep(s))]}_{\displaystyle =~\const(\pitarget,\phi) + {\color{red}\jbcrep(\pirep)}}},\label{eq:tabular-bc}
\end{empheq}
\end{center}
\end{minipage}

where $C_1 = \gamma|A|(1-\gamma)^{-1}(1+\dchi(\visittarget\|\visitrb)^{\frac{1}{2}})$, $C_2=\gamma(1-\gamma)^{-1}(1+\dchi(\visittarget\|\visitrb)^{\frac{1}{2}})$, $C_3=\gamma(1-\gamma)^{-1}$, $\pi_{\alpha^*}$ is the optimal action decoder for a specific data distribution $\visitrb$ and a specific $\phi$: 
\begin{equation*}
\pi_{\alpha^*}(a|s,z) = \frac{\visitrb(s,a)\cdot\mathbbm{1}[z=\phi(s,a)]}{\sum_{a'\in\Aset }\visitrb(s,a')\cdot\mathbbm{1}[z=\phi(s,a')]},
\end{equation*}
and $\pi_{*,Z}$ is the marginalization of $\pitarget$ onto $\Zset$ according to $\phi$:
\begin{equation*}
    \pi_{*,\Zset}(z|s) \defeq \sum_{a\in\Aset, z=\phi(s,a)}\pitarget(a|s).
\end{equation*}
\end{theorem}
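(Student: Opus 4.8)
The plan is to reduce the claim to per-state discrepancies between one-step \emph{state-to-state kernels} and then route through two intermediate policies. For a policy $\pi$ write $P^\pi(\cdot\mid s)\defeq\E_{a\sim\pi(s)}[\Trans(s,a)]$ for the induced state kernel. A dynamics-aware refinement of the stated behavioral-cloning inequality — coming from the same telescoping, $d^\pi-d^{\pi'}=\gamma\,d^{\pi'}(P^\pi-P^{\pi'})(I-\gamma P^\pi)^{-1}$ together with $\|(I-\gamma P^\pi)^{-1}\|_{1\to1}\le(1-\gamma)^{-1}$ — gives $\Diff(\pi,\pitarget)\le\frac{\gamma}{1-\gamma}\E_{s\sim\visittarget}[\dtv(P^\pi(s)\|P^{\pitarget}(s))]$ (note this implies the stated bound after data processing through $\Trans$, Pinsker, and Jensen). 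Applying it with $\pi=\pidec\circ\pirep$ and splitting the inner TV by the triangle inequality through the kernels of $\pi_{\alpha^*}\circ\pi_{*,\Zset}$ and $\pidec\circ\pi_{*,\Zset}$, it suffices to bound, in expectation over $s\sim\visittarget$, the three per-state quantities $\dtv(P^{\pitarget}(s)\|P^{\pi_{\alpha^*}\circ\pi_{*,\Zset}}(s))$, $\dtv(P^{\pi_{\alpha^*}\circ\pi_{*,\Zset}}(s)\|P^{\pidec\circ\pi_{*,\Zset}}(s))$, and $\dtv(P^{\pidec\circ\pi_{*,\Zset}}(s)\|P^{\pidec\circ\pirep}(s))$, which will match (\ref{eq:tabular-rep}), (\ref{eq:tabular-dec}), (\ref{eq:tabular-bc}) respectively.

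The last two pieces are routine. For the decoder piece, both kernels are pushforwards of $\pi_{\alpha^*}(\cdot\mid s,z)$, resp. $\pidec(\cdot\mid s,z)$, through the common channels $z\sim\pi_{*,\Zset}(s)$ and then $\Trans$, so data processing and convexity of TV bound it by $\E_{z\sim\pi_{*,\Zset}(s)}[\dtv(\pi_{\alpha^*}(s,z)\|\pidec(s,z))]\le\max_z\dtv(\pi_{\alpha^*}(s,z)\|\pidec(s,z))$ (the $\max$ decouples from the expert's latent marginal, which is why (\ref{eq:tabular-dec}) carries a $\max_z$); Pinsker gives $\le\sqrt{\tfrac12\max_z\dkl(\pi_{\alpha^*}(s,z)\|\pidec(s,z))}$, and a Cauchy--Schwarz change of the outer measure from $\visittarget$ to $\visitrb$ (producing $\sqrt{1+\dchi(\visittarget\|\visitrb)}\le 1+\dchi(\visittarget\|\visitrb)^{1/2}$) yields (\ref{eq:tabular-dec}) with constant $C_2$. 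For the latent-BC piece, data processing through the common channels $\pidec$ then $\Trans$ gives $\le\dtv(\pi_{*,\Zset}(s)\|\pirep(s))$, and Pinsker followed by Jensen over $s\sim\visittarget$ (no change of measure needed, hence the clean $C_3=\gamma(1-\gamma)^{-1}$) gives (\ref{eq:tabular-bc}).

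The transition piece is the crux, because the action-level gap $\dtv(\pitarget(s)\|\pi_{\alpha^*}\circ\pi_{*,\Zset}(s))$ need not be small — the optimal decoder may emit a different action that shares the same latent value — so one must use the dynamics. Both $P^{\pitarget}(\cdot\mid s)$ and $P^{\pi_{\alpha^*}\circ\pi_{*,\Zset}}(\cdot\mid s)$ are mixtures $\sum_a w(a)\Trans(s,a)$ of the true transitions, with weights $w=\pitarget(\cdot\mid s)$ and $w=q(\cdot\mid s)$ where $q(a\mid s)\defeq\pi_{*,\Zset}(\phi(s,a)\mid s)\,\pi_{\alpha^*}(a\mid s,\phi(s,a))$; and these two weight vectors have the same $\phi$-pushforward onto $\Zset$ (both equal $\pi_{*,\Zset}(\cdot\mid s)$, using that $\pi_{\alpha^*}(\cdot\mid s,z)$ is a distribution supported on $\{a:\phi(s,a)=z\}$). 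Hence, subtracting the two mixtures and inserting $\Trep(s,\phi(s,a))$ inside the sum, the inserted terms cancel — $\sum_a(\pitarget(a\mid s)-q(a\mid s))\Trep(s,\phi(s,a))=\sum_z\Trep(s,z)\sum_{a:\phi(s,a)=z}(\pitarget(a\mid s)-q(a\mid s))=0$ — leaving only $\sum_a(\pitarget(a\mid s)-q(a\mid s))(\Trans(s,a)-\Trep(s,\phi(s,a)))$. Bounding $|\pitarget(a\mid s)-q(a\mid s)|\le\max(\pitarget(a\mid s),q(a\mid s))\le 1=|\Aset|\,\uniform(a)$ turns the per-state term into $|\Aset|\,\E_{a\sim\uniform}[\dtv(\Trans(s,a)\|\Trep(s,\phi(s,a)))]$; taking $\E_{s\sim\visittarget}$, changing the state measure to $\visitrb$ by Cauchy--Schwarz (the action marginal is already $\uniform$ on both sides), and applying Pinsker recovers $C_1\sqrt{\tfrac12\jtrans(\Trep,\phi)}$, the $|\Aset|$ in $C_1$ arising from the uniform action sampling in $\Doff$ and the $1+\dchi^{1/2}$ from the state change of measure. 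Summing the three bounds proves the theorem. The only genuinely delicate step is this cancellation — it is what makes the bound depend on the \emph{within-fiber} transition error $\jtrans$ rather than on how far the decoder's actions are from the expert's — with the remaining work being bookkeeping of the $|\Aset|$ and $\dchi$ factors (and observing that $\pi_{\alpha^*}$ is well defined on the $\visittarget$-support, since there $\visitrb$ has full action support).
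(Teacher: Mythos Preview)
Your proof is correct and arrives at the same bound, but it organizes the argument differently from the paper. The paper first replaces $\Trans$ by $\Tmodel=\Trep\circ\phi$ \emph{globally} in the one-step error (its Lemma~\ref{lem:model1}), paying the transition term; then, because $\Tmodel$ only sees $\phi(s,a)$, the remaining error collapses to a latent-action TV $\E_{s\sim\visittarget}[\dtv(\pi_{*,\Zset}(s)\|(\pidec\circ\pirep)_\Zset(s))]$ (Lemma~\ref{lem:bottleneck}), which is then split via a triangle inequality through $\pirep$ using the identity $(\pi_{\alpha^*}\circ\pirep)_\Zset=\pirep$ (Lemmas~\ref{lem:marginal-opt}--\ref{lem:tvs}). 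You instead apply the triangle inequality \emph{first}, at the state-kernel level, through the two action-space intermediaries $\pi_{\alpha^*}\circ\pi_{*,\Zset}$ and $\pidec\circ\pi_{*,\Zset}$; your transition piece then isolates the cancellation directly (both $\pitarget$ and $\pi_{\alpha^*}\circ\pi_{*,\Zset}$ have $\phi$-pushforward $\pi_{*,\Zset}$, so the inserted $\Trep$-terms vanish), while the decoder and latent-BC pieces never touch the model at all and use only data processing through the true $\Trans$. The paper's modular route yields reusable lemmas (model replacement and the bottleneck collapse hold for arbitrary $\pi_1,\pi_2$), whereas your route makes more transparent \emph{why} the transition error governs only within-fiber discrepancies and avoids ever substituting $\Tmodel$ into the decoder or latent-BC terms. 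The change-of-measure and Pinsker steps, and the origin of the $|\Aset|$ factor from $|\pitarget-q|\le 1 = |\Aset|\,\uniform$, are the same in both.
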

Theorem~\ref{thm:tabular} essentially decomposes the imitation learning error into (\ref{eq:tabular-rep}) a transition-based representation error $\jtrans$, (\ref{eq:tabular-dec}) an action decoding error $\jbcdec$, and (\ref{eq:tabular-bc}) a latent behavioral cloning error $\jbcrep$. Notice that only (\ref{eq:tabular-bc}) requires expert data $\Demos$; (\ref{eq:tabular-rep}) and (\ref{eq:tabular-dec}) are trained on the large offline data $\Doff$. By choosing $|\Zset|$ that is smaller than $|\Aset|$, fewer demonstrations are needed to achieve small error in $\jbcrep$ compared to vanilla BC with $\jbc$. The Pearson $\chi^2$ divergence term $\dchi(\visittarget\|\visitrb)$ in $C_1$ and $C_2$ accounts for the difference in state visitation between the expert and offline data. In the case where $\visittarget$ differs too much from $\visitrb$, known as the distribution shift problem in offline RL~\citep{levine2020offline}, the errors from $\jtrans$ and $\jbcdec$ are amplified and the terms (\ref{eq:tabular-rep}) and (\ref{eq:tabular-dec}) in Theorem~\ref{thm:tabular} dominate. Otherwise, as $\jtrans\rightarrow0$ and $\pidec,\phi\rightarrow\argmin\jbcdec$, optimizing $\pirep$ in the latent action space is guaranteed to optimize $\pi$ in the original action space.

\paragraph{Sample Complexity}
To formalize the intuition that a smaller latent action space $|\Zset|<|\Aset|$ leads to more sample efficient downstream behavioral cloning, we provide the following theorem in the tabular action setting. First, assume access to an oracle latent action representation function $\phi_{\sopt}\defeq\opt_\phi(\Doff)$ which yields pretraining errors (\ref{eq:tabular-rep})$(\phi_{\sopt})$ and (\ref{eq:tabular-dec})$(\phi_{\sopt})$ in Theorem~\ref{thm:tabular}. For downstream behavioral cloning, we consider learning a tabular $\pirep$ on $\Demos$ with $n$ expert samples. We can bound the expected difference between a latent policy $\pi_{\sopt,\Zset}$ with respect to $\phi_{\sopt}$ and $\pitarget$ as follows.
\begin{theorem}
\label{thm:sample}
Let $\phi_{\sopt}\defeq \opt_\phi(\Doff)$ and $\pi_{\sopt,\Zset}$ be the latent BC policy with respect to $\phi_{\sopt}$. We have,
\begin{equation*}
    \E_{\Demos}[\Diff(\pi_{\sopt,\Zset}, \pitarget)] \le (\ref{eq:tabular-rep})(\phi_{\sopt}) + (\ref{eq:tabular-dec})(\phi_{\sopt}) + C_3\cdot\sqrt{\frac{|\Zset||\Sset|}{n}},
\end{equation*}
where $C_3$ is the same as in Theorem~\ref{thm:tabular}.
\end{theorem}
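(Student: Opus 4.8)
The plan is to derive the bound as a corollary of Theorem~\ref{thm:tabular} combined with a standard analysis of tabular maximum-likelihood estimation. First, instantiate Theorem~\ref{thm:tabular} with $\phi=\phi_{\sopt}$ together with the latent model $\Trep$ and decoder $\pidec$ that realize the errors $(\ref{eq:tabular-rep})(\phi_{\sopt})$ and $(\ref{eq:tabular-dec})(\phi_{\sopt})$, and with $\pirep=\pi_{\sopt,\Zset}$ taken to be the add-one (Laplace) smoothed empirical estimate built from the relabeled demonstrations $\{(s_i,\phi_{\sopt}(s_i,a_i))\}_{i=1}^n$ (uniform at any state not visited by $\Demos$). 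Since Theorem~\ref{thm:tabular} is valid for every tabular $\pirep$, it holds for this random choice, so for each realization of $\Demos$,
\begin{equation*}
\Diff(\pi_{\sopt,\Zset},\pitarget)\;\le\;(\ref{eq:tabular-rep})(\phi_{\sopt})+(\ref{eq:tabular-dec})(\phi_{\sopt})+C_3\sqrt{\tfrac12\,\E_{s\sim\visittarget}\!\big[\dkl(\pi_{*,\Zset}(s)\,\|\,\pi_{\sopt,\Zset}(s))\big]}.
\end{equation*}
The first two terms depend only on the fixed dataset $\Doff$, hence are unchanged by $\E_{\Demos}$; only the third term requires work.

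Next, I would apply Jensen's inequality (concavity of $\sqrt{\cdot}$) to pull $\E_{\Demos}$ inside the square root, so that it suffices to bound
\begin{equation*}
R\;\defeq\;\E_{\Demos}\,\E_{s\sim\visittarget}\!\big[\dkl(\pi_{*,\Zset}(\cdot\mid s)\,\|\,\pi_{\sopt,\Zset}(\cdot\mid s))\big]\;=\;\sum_{s}\visittarget(s)\,\E_{\Demos}\!\big[\dkl(\pi_{*,\Zset}(\cdot\mid s)\,\|\,\pi_{\sopt,\Zset}(\cdot\mid s))\big].
\end{equation*}
Conditioned on the count $N_s\sim\mathrm{Bin}(n,\visittarget(s))$ of demonstrations at state $s$, the latent labels observed at $s$ are $N_s$ i.i.d.\ draws from $\pi_{*,\Zset}(\cdot\mid s)$, a distribution on at most $|\Zset|$ outcomes, and $\pi_{\sopt,\Zset}(\cdot\mid s)$ is exactly its Laplace-smoothed empirical estimate.

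The key ingredient is the elementary estimate that for any distribution $p$ on $d$ outcomes and its add-one estimate $\hat p_k$ from $k$ i.i.d.\ samples $\E[\dkl(p\|\hat p_k)]\le\frac{d-1}{k+1}$; this follows in two lines from $\log x\le x-1$ and $\E[(n_j+1)^{-1}]\le((k+1)p_j)^{-1}$ for $n_j\sim\mathrm{Bin}(k,p_j)$, and it also covers $k=0$. Applying it conditionally on $N_s$ and using $\E[(N_s+1)^{-1}]\le((n+1)\visittarget(s))^{-1}$ gives $\E_{\Demos}[\dkl(\pi_{*,\Zset}(\cdot\mid s)\|\pi_{\sopt,\Zset}(\cdot\mid s))]\le\frac{|\Zset|-1}{(n+1)\visittarget(s)}$, hence $R\le\sum_{s}\visittarget(s)\cdot\frac{|\Zset|-1}{(n+1)\visittarget(s)}=\frac{(|\Zset|-1)|\Sset|}{n+1}$. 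Substituting back, $\E_{\Demos}[\Diff(\pi_{\sopt,\Zset},\pitarget)]\le(\ref{eq:tabular-rep})(\phi_{\sopt})+(\ref{eq:tabular-dec})(\phi_{\sopt})+C_3\sqrt{\frac{(|\Zset|-1)|\Sset|}{2(n+1)}}$, which is at most the claimed bound.

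I expect the main obstacle to be the per-state estimation step: with raw empirical frequencies the KL is infinite as soon as an expert latent action is unseen at some state, so smoothing is mandatory, and one must control the induced bias uniformly over all states — including low-probability states and the event $N_s=0$ — while still obtaining the clean $O(|\Zset|/N_s)$ rate without an extraneous logarithmic factor. The telescoping $\sum_s\visittarget(s)\cdot\frac{1}{(n+1)\visittarget(s)}=\frac{|\Sset|}{n+1}$ is what upgrades the per-state $1/N_s$ rate to the global $|\Sset|/n$ factor, and the appearance of $|\Zset|$ in place of $|\Aset|$ is exactly the sample-complexity improvement promised by imitating in the latent action space.
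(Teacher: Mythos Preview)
Your argument is correct, but it proceeds along a genuinely different route from the paper. The paper does \emph{not} invoke the KL form of Theorem~\ref{thm:tabular}; instead it goes back to the intermediate on-policy bound (Equation~\eqref{eq:onpolicy} in the proof of Theorem~\ref{thm:tabular}, before Pinsker is applied), so that the third term is $C_3\cdot\E_{s\sim\visittarget}[\dtv(\pi_{*,\Zset}(s)\|\pirep(s))]$ rather than a square-rooted KL. It then takes $\pirep$ to be the \emph{raw} empirical conditional and bounds the expected TV via a joint decomposition $\E_{s\sim\rho}[\dtv(\pi(s)\|\widehat\pi(s))]\le \dtv(\rho\|\widehat\rho)+\dtv(x\|\widehat x)$, each piece controlled by the classical $\tfrac12\sqrt{k/n}$ bound on empirical TV (Lemma~8 of Berend--Kontorovich). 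Your approach instead applies Theorem~\ref{thm:tabular} exactly as stated, which forces you to control a KL; you correctly observe that this mandates Laplace smoothing and then use the elegant $\chi^2$ bound $\E[\dkl(p\|\widehat p_k)]\le (d-1)/(k+1)$ together with the binomial moment $\E[(N_s+1)^{-1}]\le((n+1)\visittarget(s))^{-1}$ to telescope the per-state rates into $|\Sset|/(n+1)$. The paper's TV route is a bit more direct (no smoothing, no tower over $N_s$), while your KL route has the virtue of using Theorem~\ref{thm:tabular} as a black box and even yields a slightly sharper constant (an extra factor $\tfrac12$ under the root).
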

We can contrast this bound to its form in the vanilla BC setting, for which $|\Zset|=|\Aset|$ and both (\ref{eq:tabular-rep})$(\phi_{\sopt})$ and (\ref{eq:tabular-dec})$(\phi_{\sopt})$ are zero. We can expect an improvement in sample complexity from reparametrized actions when the errors in (\ref{eq:tabular-rep}) and (\ref{eq:tabular-dec}) are small and $|\Zset| < |\Aset|$.

\subsection{Linear Transition Models with Deterministic Latent Policy}\label{sec:linear}
Theorem \ref{thm:tabular} has introduced the notion of a latent expert policy $\pi_{*,Z}$, and minimizes the KL divergence between $\pi_{*,Z}$ and a \emph{tabular} latent policy $\pirep$. However, it is not immediately clear, in the case of continuous actions, how to ensure that the latent policy $\pirep$ is expressive enough to capture any $\pi_{*,Z}$. In this section, we provide guarantees for recovering stochastic expert policies with continuous action space under a linear transition model.

Consider a \emph{continuous} latent space $\Zset\subset\R^d$ and a \emph{deterministic} latent policy $\pilrep(s)=\theta_s$ for some $\theta\in\R^{d\times|S|}$. While a deterministic $\theta$ in general cannot capture a stochastic $\pitarget$, we show that under a linear transition model $\Trep(s'|s,\phi(s,a))=w(s')^\top \phi(s,a)$, there always exists a deterministic policy $\theta:\Sset\to\R^d$, such that $\theta_s = \pi_{*,Z}(s),\,\forall s\in\Sset$. This means that our scheme for offline pretraining paired with downstream imitation learning can \emph{provably} recover any expert policy $\pitarget$ from a deterministic $\pilrep$, regardless of whether $\pitarget$ is stochastic.
\begin{theorem}
\label{thm:linear}
Let $\phi:\Sset\times A \to\Zset$ for some $\Zset\subset\R^d$ and suppose there exist $w:\Sset\to\R^d$ such that $\Trep(s'|s,\phi(s,a))=w(s')^\top \phi(s,a)$ for all $s,s'\in\Sset,a\in\Aset$.
Let $\pidec:\Sset\times\Zset\to\Delta(\Aset)$ be an action decoder, $\pi:\Sset\to\Delta(\Aset)$ be any policy in $\mdp$ and $\pilrep:\Sset\to\R^d$ be a deterministic latent policy for some $\theta\in\R^{d\times|S|}$.
Then,

\begin{minipage}{\textwidth}
\begin{center}
\begin{equation*}
    \hspace{-15em}
    \Diff(\pidec\circ\pilrep,\pitarget) \leq (\ref{eq:tabular-rep})(\Trep,\phi) + (\ref{eq:tabular-dec})(\pidec,\phi)
\end{equation*}
\begin{empheq}[left={\hspace{-2cm}\parbox{1.8cm}{\text{Downstream} \\ \text{Imitation}}\empheqlbrace}]{align}
 &+ C_4 \cdot \left\|\frac{\partial}{\partial\theta} \E_{s\sim d^{\pitarget}, a\sim\pitarget(s)}[(\theta_s - \phi(s,a))^2]\right\|_1,\label{eq:linear-bc}
\end{empheq}
\end{center}
\end{minipage}

where $C_4=\frac{1}{4}|S|\|w\|_\infty$, (\ref{eq:tabular-rep}) and (\ref{eq:tabular-dec}) corresponds to the first and second terms in the bound in Theorem~\ref{thm:tabular}.
\end{theorem}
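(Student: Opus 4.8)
The plan is to reduce Theorem~\ref{thm:linear} to Theorem~\ref{thm:tabular} by exhibiting a single choice of $\theta$ for which the tabular latent BC term~(\ref{eq:tabular-bc}) is controlled by the gradient-norm quantity in~(\ref{eq:linear-bc}). First I would invoke Theorem~\ref{thm:tabular} verbatim with the continuous latent space $\Zset\subset\R^d$ and with $\pirep$ taken to be the point mass $\delta_{\theta_s}$; the first two terms~(\ref{eq:tabular-rep}) and~(\ref{eq:tabular-dec}) carry over unchanged, so everything hinges on bounding the downstream term $C_3\sqrt{\tfrac12\E_{s\sim\visittarget}[\dkl(\pi_{*,Z}(s)\|\delta_{\theta_s})]}$. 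Since a deterministic policy generically has infinite KL to a stochastic target, the key structural step is the observation, to be proved next, that the linear factorization $\Trep(s'|s,\phi(s,a))=w(s')^\top\phi(s,a)$ collapses the relevant discrepancy: because $\Trep$ depends on $\phi(s,a)$ only through an inner product, two latent values $z_1,z_2$ with $\E_{a\sim\pi_{*,Z}}[\,\cdot\,]$-matched first moments induce the same latent transition, so it suffices to match $\theta_s$ to the \emph{mean} latent action $\bar z_s := \E_{a\sim\pitarget(s)}[\phi(s,a)]$ rather than the full distribution $\pi_{*,Z}(s)$.

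Concretely, I would set $\theta_s = \bar z_s$ and re-run the visitation-difference telescoping argument from the proof of Theorem~\ref{thm:tabular} (the one that produces the $\sqrt{\dkl}$ factors), but now tracking $\dtv\big(\Trep(s,\theta_s)\,\|\,\E_{a\sim\pitarget(s)}\Trep(s,\phi(s,a))\big)$ in place of the latent-policy KL. Linearity gives $\E_{a\sim\pitarget(s)}\Trep(s'|s,\phi(s,a)) = w(s')^\top \bar z_s = \Trep(s'|s,\theta_s)$ exactly when $\theta_s=\bar z_s$, so with this \emph{ideal} choice the downstream term vanishes. For a general $\theta$, I would bound $\dtv\big(\Trep(s,\theta_s)\,\|\,\Trep(s,\bar z_s)\big) = \tfrac12\sum_{s'}|w(s')^\top(\theta_s-\bar z_s)| \le \tfrac12 |S|\,\|w\|_\infty\,\|\theta_s-\bar z_s\|$ (in the appropriate componentwise sense), and then relate $\|\theta_s-\bar z_s\|$, summed against $\visittarget$, to $\big\|\tfrac{\partial}{\partial\theta}\E_{s,a}[(\theta_s-\phi(s,a))^2]\big\|_1$: the partial derivative of that quadratic with respect to the block $\theta_s$ is $2\visittarget(s)(\theta_s-\bar z_s)$, so the $\ell_1$ norm of the full gradient is exactly $2\sum_s \visittarget(s)\|\theta_s-\bar z_s\|_1$. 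Plugging into $C_3=\gamma(1-\gamma)^{-1}$ and folding the $\tfrac12|S|\|w\|_\infty$ and the factor $2$ into $C_4=\tfrac14|S|\|w\|_\infty$ yields~(\ref{eq:linear-bc}).

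The main obstacle I anticipate is the bookkeeping around \emph{which} norm and averaging convention makes the telescoping clean while still landing on the stated constant $C_4=\tfrac14|S|\|w\|_\infty$ — in particular, getting the $\dtv$-vs-$\sqrt{\dkl}$ mismatch to work in our favor (here we want a \emph{linear} dependence on $\|\theta_s-\bar z_s\|$, not a square root, which is why matching means rather than invoking Pinsker is essential) and making sure the per-state errors aggregate against $\visittarget$ with the same weights that appear when we differentiate the quadratic objective. A secondary subtlety is justifying that $\Trep(s,\bar z_s)$ is a genuine probability distribution (nonnegativity and normalization of $w(s')^\top\bar z_s$), which follows since $\bar z_s$ is a convex combination of the $\phi(s,a)$ and $\Trep(s,\phi(s,a))$ is assumed to be a valid distribution for each $a$; I would state this as a one-line remark rather than belabor it. Everything else — the reuse of~(\ref{eq:tabular-rep}) and~(\ref{eq:tabular-dec}), the performance-difference lemma, the $\chi^2$ change-of-measure — is inherited directly from Theorem~\ref{thm:tabular} and needs no rework.
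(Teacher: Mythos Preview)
Your proposal is correct and follows essentially the same route as the paper: both compute the gradient $\partial_{\theta_s}\E[(\theta_{\tilde s}-\phi(\tilde s,a))^2]=2\visittarget(s)(\theta_s-\bar z_s)$, use the linearity $\Trep=w^\top\phi$ to identify this (via $w(s')^\top(\theta_s-\bar z_s)$) with the transition-level discrepancy, and bound $\err_{d^{\pitarget}}(\pitarget,\pilrep,\Tmodel)\le\tfrac14|S|\|w\|_\infty\|\partial_\theta E\|_1$ before combining with Lemmas~\ref{lem:performance}, \ref{lem:model1}, \ref{lem:off-policy} exactly as in the proof of Theorem~\ref{thm:tabular}. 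The only cosmetic difference is ordering---the paper writes the gradient first and reads off the transition error, whereas you isolate $\bar z_s$ first and then recognize the gradient---and your opening ``invoke Theorem~\ref{thm:tabular} verbatim'' is slightly misleading since (as you immediately note) the latent KL is infinite; the paper accordingly bypasses Lemma~\ref{lem:bottleneck} entirely and substitutes the gradient bound directly, never passing through a latent-policy divergence.
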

By replacing term (\ref{eq:tabular-bc}) in Theorem~\ref{thm:tabular} that corresponds to behavioral cloning in the latent action space by term (\ref{eq:linear-bc}) in Theorem~\ref{thm:linear} that is a convex function unbounded in all directions, we are guaranteed that $\pilrep$ is provably optimal regardless of the form of $\pitarget$ and $\pi_{*,Z}$.
Note that the downstream imitation learning objective implied by term (\ref{eq:linear-bc}) is simply the mean squared error between actions $\theta_s$ chosen by $\pi_\theta$ and reparameterized actions $\phi(s,a)$ appearing in the expert dataset.

\subsection{\method: Reparametrized Actions and Imitation Learning in Practice}\label{sec:learning}

In this section, we describe our learning framework, Transition-Reparametrized Actions for Imitation Learning (\method). \method consists of two training stages: pretraining and downstream behavioral cloning. During pretraining, \method learns $\Trep$ and $\phi$ by minimizing $\jtrans(\Trep,\phi)=\E_{(s,a)\sim\visitrb}\left[\dkl(\Trans(s,a)\|\Trep(s, \phi(s,a)))\right]$. Also during pretraining, \method learns $\pidec$ and $\phi$ by minimizing $\jbcdec(\pidec,\phi) \defeq \E_{(s,a)\sim\visitrb}[-\log\pidec(a|s, \phi(s, a))]$. 
\method parametrizes $\pidec$ as a multivariate Gaussian distribution. Depending on whether $\Trep$ is defined according to Theorem~\ref{thm:tabular} or Theorem~\ref{thm:linear}, we have either \method EBM or \method linear.




\textbf{\method EBM for Theorem~\ref{thm:tabular}.} In the tabular action setting that corresponds to Theorem~\ref{thm:tabular}, to ensure that the factored transition model $\Trep$ is flexible to capture any complex (e.g., multi-modal) transitions in the offline dataset, we propose to use an energy-based model (EBM) to parametrize $\Trep(s'|s,\phi(s,a))$,
\begin{equation}
    \Trep(s'|s, \phi(s,a)) \propto \rho(s')\text{exp}(-\|\phi(s,a) - \psi(s')\|^2),
\end{equation}
where $\rho$ is a fixed distribution over $\Sset$. In our implementation we set $\rho$ to be the distribution of $s'$ in $\visitrb$, which enables a practical learning objective for $\Trep$ by minimizing $\E_{(s,a)\sim\visitrb}\left[\dkl(\Trans(s,a)\|\Trep(s, \phi(s,a)))\right]$ in Theorem~\ref{thm:tabular} using a contrastive loss:
\begin{multline*}
    \E_{\visitrb}[-\log\Trep(s'|s, \phi(s, a)))] = \mathrm{const}(\visitrb) + \frac{1}{2}\E_{\visitrb}[||\phi(s, a) - \psi(s')||^2] \\
    + \log\E_{\tilde{s}'\sim\rho}[\exp\{-\frac{1}{2}||\phi(s, a) - \psi(\tilde{s}')||^2\}].
\end{multline*}
During downstream behavioral cloning, \method EBM learns a latent Gaussian policy $\pirep$ by minimizing $\jbcrep(\pirep)=\E_{(s,a)\sim(\visittarget,\pitarget)}[-\log\pirep(\phi(s,a)|s)]$ with $\phi$ fixed. During inference, \method EBM first samples a latent action according to $z\sim\pirep(s)$, and decodes the latent action using $a\sim\pidec(s,z)$ to act in an environment.~\Figref{fig:framework} describes this process pictorially. 

\textbf{\method Linear for Theorem~\ref{thm:linear}.} In the continuous action setting that corresponds to Theorem~\ref{thm:linear}, we propose \method linear, an approximation of \method EBM, to enable learning \emph{linear} transition models required by Theorem~\ref{thm:linear}. Specifically, we first learn $f,g$ that parameterize an energy-based transition model $\Tmodel(s'|s,a)\propto \rho(s')\exp\{-||f(s, a) - g(s')||^2/2\}$ using the same contrastive loss as above (replacing $\phi$ and $\psi$ by $f$ and $g$), and then apply random Fourier features~\citep{rahimi2007random} to recover $\bar\phi(s,a)=\cos(Wf(s,a)+b)$, where $W$ and $b$ are implemented as an untrainable neural network layer on top of $f$. This results in an approximate linear transition model,
\begin{equation*}
\Tmodel(s'|s,a)\propto\rho(s')\exp\{-||f(s,a) - g(s')||^2/2\} \propto \bar\psi(s')^\top\bar\phi(s,a). 
\end{equation*}
During downstream behavioral cloning, \method linear learns a deterministic policy $\pilrep$ in the continuous latent action space determined by $\bar{\phi}$ via minimizing $\left\|\frac{\partial}{\partial\theta} \E_{s\sim d^{\pitarget}, a\sim\pitarget(s)}[(\theta_s - \bar\phi(s,a))^2]\right\|_1$ with $\bar\phi$ fixed.
During inference, \method linear first determines the latent action according to $z = \pilrep(s)$, and decodes the latent action using $a\sim\pidec(s,z)$ to act in an environment.
\section{Experimental Evaluation}
\begin{figure}[b]
\centering
 \includegraphics[width=\linewidth]{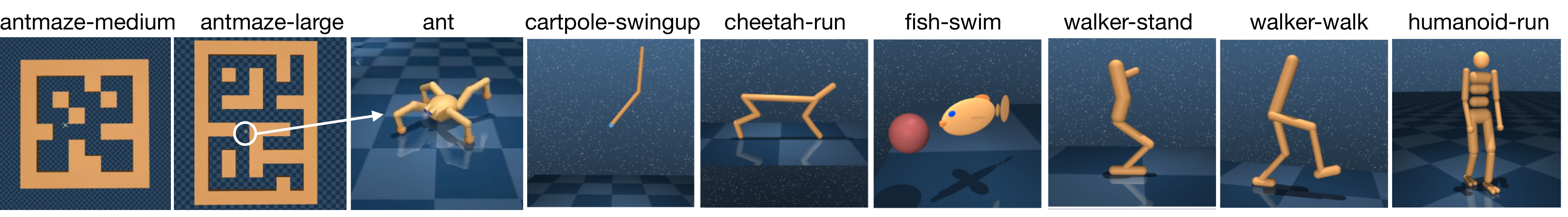}
 \caption{Tasks for our empirical evaluation. We include the challenging AntMaze navigation tasks from D4RL~\citep{fu2020d4rl} and low (1-DoF) to high (21-DoF) dimensional locomotaion tasks from DeepMind Control Suite~\citep{tassa2018deepmind}.}
 \label{fig:tasks}
\end{figure}

We now evaluate \method on a set of navigation and locomotion tasks (\Figref{fig:tasks}). 
Our evaluation is designed to study how well \method can improve imitation learning with limited expert data by leveraging available suboptimal offline data. We evaluate the improvement attained by \method over vanilla BC, and additionally compare \method to previously proposed temporal skill extraction methods. Since there is no existing benchmark for imitation learning with suboptimal offline data, we adapt existing datasets for offline RL, which contain suboptimal data, and augment them with a small amount of expert data for downstream imitation learning.


\subsection{Evaluating Navigation without Temporal Abstraction}

\begin{figure}[t]
\centering
 \includegraphics[width=\linewidth]{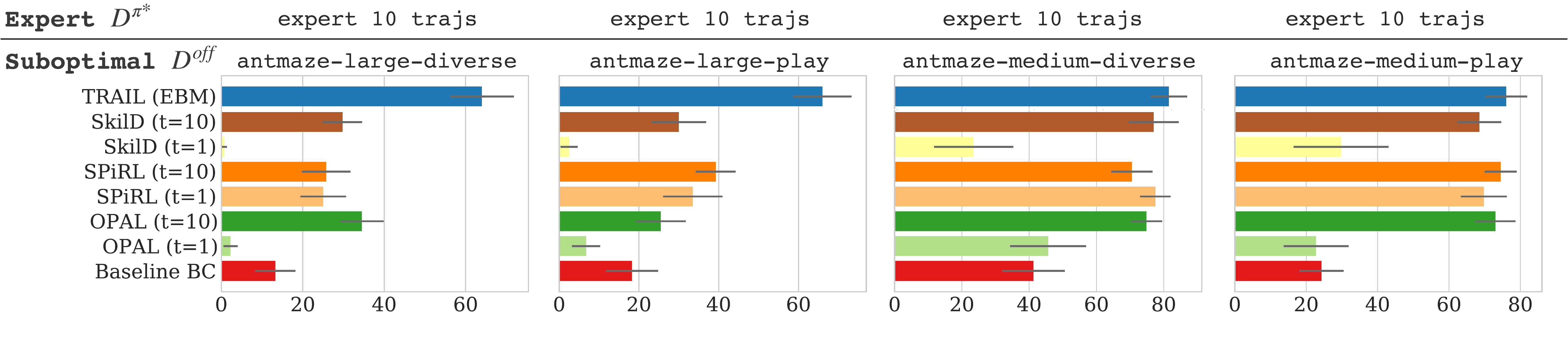}
 \caption{Average success rate ($\%$) over $4$ seeds of \method EBM (Theorem~\ref{thm:tabular}) and temporal skill extraction methods -- SkiLD~\citep{pertsch2021guided}, SPiRL~\citep{pertsch2020accelerating}, and OPAL~\citep{ajay2020opal} -- pretrained on suboptimal $\Doff$. Baseline BC corresponds to direct behavioral cloning of expert $\Demos$ without latent actions.}
 \label{fig:antmaze}
\end{figure}

\paragraph{Description and Baselines.} We start our evaluation on the AntMaze task from D4RL~\citep{fu2020d4rl}, which has been used as a testbed by recent works on temporal skill extraction for few-shot imitation~\citep{ajay2020opal} and RL~\citep{ajay2020opal,pertsch2020accelerating,pertsch2021guided}. 
We compare \method to OPAL~\citep{ajay2020opal}, SkilD~\citep{pertsch2021guided}, and SPiRL~\citep{pertsch2020accelerating}, all of which use an offline dataset to extract temporally extended (length $t=10$) skills to form a latent action space for downstream learning. SkiLD and SPiRL are originally designed only for downstream RL, so we modify them to support downstream imitation learning as described in Appendix~\ref{app:exp}. While a number of other works have also proposed to learn primitives for hierarchical imitation~\citep{kipf2019compile,hakhamaneshi2021hierarchical} and RL~\citep{fox2017multi,krishnan2017ddco,shankar2019discovering,shankar2020learning,singh2020parrot}, we chose OPAL, SkiLD, and SPiRL for comparison because they are the most recent works in this area with reported results that suggest these methods are state-of-the-art, especially in learning from \emph{suboptimal} offline data based on D4RL.
To construct the suboptimal and expert datasets,
we follow the protocol in~\citet{ajay2020opal}, which uses the full \texttt{diverse} or \texttt{play} D4RL AntMaze datasets as the suboptimal offline data, while using a set of $n=10$ expert trajectories (navigating from one corner of the maze to the opposite corner) as the expert data. The \texttt{diverse} and \texttt{play} datasets are suboptimal in the corner-to-corner navigation task, as they only contain data that navigates to random or fixed locations different from task evaluation. 

\paragraph{Implementation Details.} For \method, we parameterize $\phi(s,a)$ and $\psi(s')$ using separate feed-forward neural networks (see details in Appendix~\ref{app:exp}) and train the transition EBM via the contrastive objective described in~\Secref{sec:learning}. We parametrize both the action decoder $\pidec$ and the latent $\pirep$ using multivariate Gaussian distributions with neural-network approximated mean and variance. For the temporal skill extraction methods, we implement the trajectory encoder using a bidirectional RNN and parametrize skill prior, latent policy, and action decoder as Gaussians following~\citet{ajay2020opal}. We adapt SPiRL and SkiLD
for imitation learning by including the KL Divergence term between the latent policy and the skill prior during downstream behavioral cloning (see details in Appendix~\ref{app:exp}). We do a search on the extend of temporal abstraction, and found $t=10$ to work the best as reported in these papers' maze experiments. We also experimented with a version of vanilla BC pretrained on the suboptimal data and fine-tuned on expert data for fair comparison, which did not show a significant difference from directly training vanilla BC on expert data.

\paragraph{Results.} \Figref{fig:antmaze} shows the average performance of \method in terms of task success rate (out of 100\%) compared to the prior methods. Since all of the prior methods are proposed in terms of temporal abstraction, we evaluate them both with the default temporal abstract, $t=10$, as well as without temporal abstraction, corresponding to $t=1$.
Note that \method uses \emph{no} temporal abstraction. 
We find that on the simpler \texttt{antmaze-medium} task, \method trained on a
a single-step transition model performs similarly to the set of temporal skill extraction methods with $t=10$. However, these skill extraction methods experience a degradation in performance when temporal abstraction is removed ($t=1$). This corroborates the existing theory in these works~\citep{ajay2020opal}, which attributes their benefits predominantly to temporal abstraction rather than producing a latent action space that is ``easier'' to learn. Meanwhile, \method is able to excel without any temporal abstraction. 

These differences become even more pronounced on the harder \texttt{antmaze-large} tasks. We see that \method maintains significant improvements over vanilla BC, whereas temporal skill extraction fails to achieve good performance even with $t=10$.
These results suggest that \method attains significant improvement specifically from utilizing the suboptimal data for learning suitable action representations, rather than simply from providing temporal abstraction. Of course, this does not mean that temporal abstraction is never helpful. Rather, our results serve as evidence that suboptimal data can be useful for imitation learning not just by providing temporally extended skills, but by actually reformulating the action space to make imitation learning easier and more efficient.

\subsection{Evaluating Locomotion with Highly Suboptimal Offline Data}

\begin{figure}[t]
\centering
 \includegraphics[width=\linewidth]{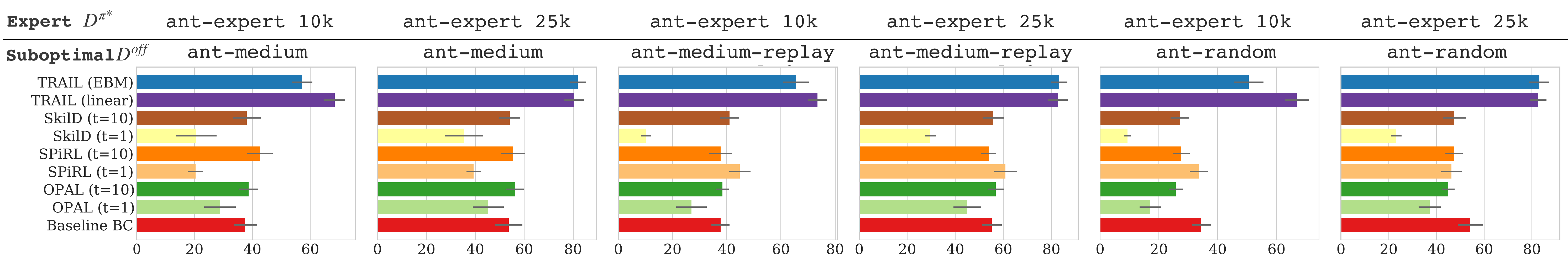}
\caption{Average rewards (over $4$ seeds) of \method EBM (Theorem~\ref{thm:tabular}), \method linear (Theorem~\ref{thm:linear}), and baseline methods when using a variety of unimodal (\texttt{ant-medium}), low-quality (\texttt{ant-medium-replay}), and random (\texttt{ant-random}) offline datasets $\Doff$ paired with a smaller expert dataset $\Demos$ (either $10$k or $25$k expert transitions).}
 \label{fig:ant}
\end{figure}

\paragraph{Description.}The performance of \method trained on a \emph{single-step} transition model in the previous section suggests that learning single-step latent action representations can benefit a broader set of tasks for which temporal abstraction may not be helpful, e.g., when the offline data is highly suboptimal (with near-random actions) or unimodal (collected by a single stationary policy).
In this section, we
consider a Gym-MuJoCo task from D4RL using the same 8-DoF quadruped ant robot as the previously evaluated navigation task. We first learn action representations from the \texttt{medium}, \texttt{medium-replay}, or \texttt{random} datasets, and imitate from $1\%$ or $2.5\%$ of the \texttt{expert} datasets from D4RL. The \texttt{medium} dataset represents data collected from a mediocre stationary policy (exhibiting unimodal behavior), and the \texttt{random} dataset is collected by a randomly initialized policy and is hence highly suboptimal.

\paragraph{Implementation Details.} For this task, we additionally train a linear version of \method by approximating the transition EBM using random Fourier features~\citep{rahimi2007random} and learn a \emph{deterministic} latent policy following Theorem~\ref{thm:linear}. Specifically, we use separate feed-forward networks to parameterize $f(s, a)$ and $g(s')$, and extract action representations using $\phi(s,a) = \cos(W f(s,a) + b)$, where $W,b$ are untrainable randomly initialized variables as described in~\Secref{sec:learning}. Different from \method EBM which parametrizes $\pirep$ as a Gaussian, \method linear parametrizes the \emph{deterministic} $\pilrep$ using a feed-forward neural network.

\paragraph{Results.} Our results are shown in \Figref{fig:ant}. Both the EBM
and linear versions of \method consistently improve over baseline BC, whereas temporal skill extraction methods generally lead to worse performance regardless of the extent of abstraction, likely due to the degenerate effect (i.e., latent skills being ignored by a flexible action decoder) resulted from unimodal offline datasets as discussed in~\citep{ajay2020opal}.
Surprisingly, \method achieves a significant performance boost even when latent actions are learned from the \texttt{random} dataset, suggesting the benefit of learning action representations from transition models when the offline data is highly suboptimal.
Additionally, the linear variant of \method performs slightly better than the EBM variant when the expert sample size is small (i.e., $10$k), suggesting
the benefit of learning deterministic latent policies from Theorem~\ref{thm:linear} when the environment is effectively approximated by a linear transition model. 

\subsection{Evaluation on DeepMind Control Suite}

\begin{figure}[t]
\centering
 \includegraphics[width=\linewidth]{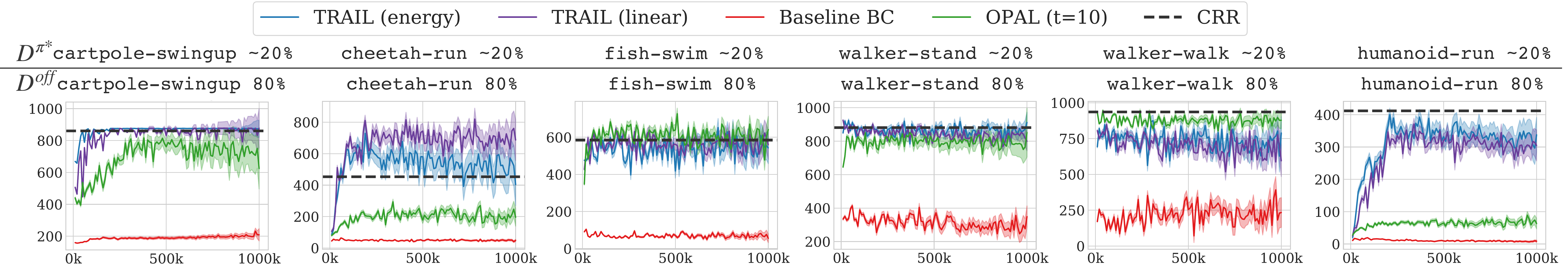}
 \caption{Average task rewards (over $4$ seeds) of \method EBM (Theorem~\ref{thm:tabular}), \method linear (Theorem~\ref{thm:linear}), and OPAL (other temporal methods are included in Appendix~\ref{app:results}) pretrained on the bottom $80\%$ of the RL Unplugged datasets followed by behavioral cloning in the latent action space on $\frac{1}{10}$ of the top $20\%$ of the RL Unplugged datasets following the setup in~\citet{zolna2020offline}. Baseline BC achieves low rewards due to the small expert sample size. Dotted lines denote the performance of CRR~\citep{wang2020critic}, an offline RL method trained on the full RL Unplugged datasets with reward labels.}
 \label{fig:rlu}
\end{figure}

\paragraph{Description.} Having witnessed the improvement \method brings to behavioral cloning on AntMaze and MuJoCo Ant, we wonder how \method perform on a wider spectrum of locomotion tasks with various degrees of freedom.
We consider $6$ locomotion tasks from the DeepMind Control Suite~\citep{tassa2018deepmind} ranging from simple (e.g., 1-DoF \texttt{cartople-swingup}) to complex (e.g., 21-DoF \texttt{humanoid-run}) tasks. Following the setup in~\citet{zolna2020offline}, we take $\frac{1}{10}$ of the trajectories whose episodic reward is among the top $20\%$ of the open source RL Unplugged datasets~\citep{gulcehre2020rl} as expert demonstrations (see numbers of expert trajectories in Appendix~\ref{app:exp}), and the bottom $80\%$ of RL Unplugged as the suboptimal offline data. For completeness, we additionally include comparison to Critic Regularized Regression (CRR)~\citep{wang2020critic}, an offline RL method with competitive performance on these tasks.
CRR is trained on the full RL Unplugged datasets (i.e., combined suboptimal and expert datasets) with reward labels.

\paragraph{Results.}\Figref{fig:rlu} shows the comparison results. \method outperforms temporal extraction methods on both low-dimensional (e.g., \texttt{cartpole-swingup}) and high-dimensional (\texttt{humanoid-run}) tasks. Additionally, \method performs similarly to or better than CRR on $4$ out of the $6$ tasks despite not using any reward labels, and only slightly worse on \texttt{humanoid-run} and \texttt{walker-walk}. To test the robustness of \method when the offline data is highly suboptimal, we further reduce the size and quality of the offline data to the bottom $5\%$ of the original RL Unplugged datasets. As shown in~\Figref{fig:rlu5} in Appndix~\ref{app:results}, the performance of temporal skill extraction declines in \texttt{fish-swim}, \texttt{walker-stand}, and \texttt{walker-walk} due to this change in offline data quality, whereas \method maintains the same performance as when the bottom $80\%$ data was used, suggesting that \method is more robust to low-quality offline data. 

This set of results suggests a promising direction for offline learning of sequential decision making policies, namely to learn latent actions from abundant low-quality data and behavioral cloning in the latent action space on scarce high-quality data. Notably, compared to offline RL, this approach is applicable to settings where data quality cannot be easily expressed through a scalar reward.

\section{Conclusion}
We have derived a near-optimal objective for learning a latent action space from suboptimal offline data that provably accelerates downstream imitation learning. To learn this objective in practice, we propose transition-reparametrized actions for imitation learning (TRAIL), a two-stage framework that first pretrains a factored transition model from offline data, and then uses the transition model to reparametrize the action space prior to behavioral cloning. Our empirical results suggest that TRAIL can improve imitation learning drastically, even when pretrained on highly suboptimal data (e.g., data from a random policy), providing a new approach to imitation learning through a combination of pretraining on task-agnostic or suboptimal data and behavioral cloning on limited expert datasets. That said, our approach to action representation learning is not necessarily specific to imitation learning, and insofar as the reparameterized action space simplifies downstream control problems, it could also be combined with reinforcement learning in future work. More broadly, studying how learned action reparameterization can accelerate various facets of learning-based control represents an exciting future direction, and we hope that our results provide initial evidence of such a potential.

\subsubsection*{Acknowledgments}
We thank Dale Schuurmans and Bo Dai for valuable discussions. We thank Justin Fu, Anurag Ajay, and Konrad Zolna for assistance in setting up evaluation tasks.

\bibliography{iclr2022_conference}
\bibliographystyle{iclr2022_conference}

\appendix
\clearpage
\begin{center}
{\huge Appendix}
\end{center}
\section{Proofs for Foundational Lemmas}
\begin{lemma}
\label{lem:performance}
If $\pi_1$ and $\pi_2$ are two policies in $\mdp$ and $d^{\pi_1}(s)$ and $d^{\pi_2}(s)$ are the state visitation distributions induced by policy $\pi_1$ and $\pi_2$ where $\visitpi(s) := (1-\gamma)\sum_{t=0}\gamma^t\cdot\Pr\left[s_t=s|\pi,\mdp\right]$. Define $\Diff(\pi_2,\pi_1) = \dtv(d^{\pi_2}\|d^{\pi_1})$ then 
\begin{equation}
\label{eq:perf-diff-bound}
\Diff(\pi_2, \pi_1) \leq \frac{\gamma}{1-\gamma}
\err_{d^{\pi_1}}(\pi_1,\pi_2,\Trans),
\end{equation}
where 
\begin{equation}
    \err_{d^{\pi_1}}(\pi_1,\pi_2,\Trans) \defeq \frac{1}{2}\sum_{s'\in\Sset} \left|\E_{s\sim d^{\pi_1},a_1\sim\pi_1(s),a_2\sim\pi_2(s)}[\Trans(s'|s,a_1) - \Trans(s'|s,a_2)]\right|.
\end{equation}
is the TV-divergence between $\Trans\circ\pi_1\circ d^{\pi_1}$ and $\Trans\circ\pi_2\circ d^{\pi_1}$.
\end{lemma}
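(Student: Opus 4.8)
The plan is to prove this by the standard Bellman-flow (``simulation lemma'') argument, phrased in terms of the linear operator that the discounted state visitation satisfies. For a policy $\pi$, write $P_\pi(s'\mid s) \defeq \E_{a\sim\pi(s)}[\Trans(s'\mid s,a)]$ for the induced state-to-state kernel, viewed as a (column-)stochastic operator acting on signed measures over $\Sset$ via $(P_\pi^\top\nu)(s') = \sum_s \nu(s) P_\pi(s'\mid s)$. The first step is to record the fixed-point identity $d^{\pi} = (1-\gamma)\mu + \gamma\, P_\pi^\top d^{\pi}$, which follows by unrolling the definition $d^{\pi}(s) = (1-\gamma)\sum_{t\ge 0}\gamma^t\Pr[s_t=s\mid\pi,\mdp]$ together with $\Pr[s_0=\cdot]=\mu$ and $\Pr[s_{t+1}=\cdot] = P_\pi^\top\Pr[s_t=\cdot]$.

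Next I would subtract the fixed-point equations for $\pi_1$ and $\pi_2$; the $(1-\gamma)\mu$ terms cancel, leaving $d^{\pi_2}-d^{\pi_1} = \gamma\bigl(P_{\pi_2}^\top d^{\pi_2} - P_{\pi_1}^\top d^{\pi_1}\bigr)$. The key algebraic move is to add and subtract $\gamma P_{\pi_2}^\top d^{\pi_1}$ — using $\pi_2$ here, not $\pi_1$, so that the residual term ends up evaluated at $d^{\pi_1}$ as the statement demands — which rearranges to $(I-\gamma P_{\pi_2}^\top)\bigl(d^{\pi_2}-d^{\pi_1}\bigr) = \gamma\,\bigl(P_{\pi_2}^\top - P_{\pi_1}^\top\bigr)d^{\pi_1}$. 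Since $\gamma<1$ and $P_{\pi_2}^\top$ is nonexpansive in $\ell_1$ on signed measures (each column of $P_{\pi_2}$ is a probability vector, so $\|P_{\pi_2}^\top\nu\|_1\le\|\nu\|_1$), the operator $I-\gamma P_{\pi_2}^\top$ is invertible with Neumann series $\sum_{k\ge 0}\gamma^k(P_{\pi_2}^\top)^k$ and $\ell_1\to\ell_1$ norm at most $(1-\gamma)^{-1}$. Applying the inverse gives $\bigl\|d^{\pi_2}-d^{\pi_1}\bigr\|_1 \le \frac{\gamma}{1-\gamma}\bigl\|(P_{\pi_2}^\top - P_{\pi_1}^\top)d^{\pi_1}\bigr\|_1$.

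Finally I would identify the right-hand side with the error term of the lemma: by the definition of $P_\pi$, $\bigl((P_{\pi_1}^\top - P_{\pi_2}^\top)d^{\pi_1}\bigr)(s') = \E_{s\sim d^{\pi_1},\,a_1\sim\pi_1(s),\,a_2\sim\pi_2(s)}[\Trans(s'\mid s,a_1) - \Trans(s'\mid s,a_2)]$, so $\tfrac12\|(P_{\pi_2}^\top - P_{\pi_1}^\top)d^{\pi_1}\|_1 = \err_{d^{\pi_1}}(\pi_1,\pi_2,\Trans)$, which is exactly the TV divergence between $\Trans\circ\pi_1\circ d^{\pi_1}$ and $\Trans\circ\pi_2\circ d^{\pi_1}$. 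Dividing the previous display by $2$ turns the left side into $\dtv(d^{\pi_2}\|d^{\pi_1}) = \Diff(\pi_2,\pi_1)$ and yields the claimed bound.

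This is a routine argument, so there is no genuine ``hard part''; the only places deserving care are (i) the bookkeeping in the add-and-subtract step — inserting $P_{\pi_2}^\top d^{\pi_1}$ (rather than $P_{\pi_1}^\top d^{\pi_2}$) so that the geometric series is in $P_{\pi_2}$ while the residual error is measured against $d^{\pi_1}$, matching the asymmetric roles of $\pi_1,\pi_2$ — and (ii) stating explicitly that a stochastic transition operator is an $\ell_1$-contraction on signed measures, which is what converts the Neumann series into the $\gamma/(1-\gamma)$ prefactor. An essentially equivalent route expands $d^{\pi} = (1-\gamma)\sum_t\gamma^t\mu^{\pi}_t$ in time-marginals and telescopes the per-step differences $\mu^{\pi_2}_t-\mu^{\pi_1}_t$; I would prefer the operator version above, since it reproduces the exact constant and the precise form of $\err_{d^{\pi_1}}$ without extra slack.
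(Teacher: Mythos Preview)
Your proof is correct and essentially identical to the paper's. The paper writes $d^{\pi_i}=(1-\gamma)(I-\gamma\Trans\Pi_i)^{-1}\mu$ directly and applies the resolvent identity $A^{-1}-B^{-1}=A^{-1}(B-A)B^{-1}$ to obtain $d^{\pi_2}-d^{\pi_1}=\gamma(I-\gamma\Trans\Pi_2)^{-1}(\Trans\Pi_2-\Trans\Pi_1)d^{\pi_1}$, then bounds $\|(I-\gamma\Trans\Pi_2)^{-1}\|\le(1-\gamma)^{-1}$ via the Neumann series; your fixed-point subtraction plus add-and-subtract of $\gamma P_{\pi_2}^\top d^{\pi_1}$ is exactly the same manipulation, just arrived at from the Bellman-flow equation rather than the closed-form inverse.
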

\begin{proof}
Following similar derivations in~\cite{achiam2017constrained,nachum2018near}, we express $\dtv(d^{\pi_2}\|d^{\pi_1})$ in linear operator notation: 
\begin{equation}
    \Diff(\pi_2,\pi_1) = \dtv(d^{\pi_2}\|d^{\pi_1}) = \frac{1}{2}\mathbf{1}|(1-\gamma)(I - \gamma \Trans\Pi_2)^{-1}\init - (1-\gamma)(I - \gamma \Trans\Pi_1)^{-1}\init|,
\end{equation}
where $\Pi_1,\Pi_2$ are linear operators $\Sset\to\Sset\times\Aset$ such that $\Pi_i \nu(s,a) = \pi_i(a|s)\nu(s)$ and $\mathbf{1}$ is an all ones row vector of size $|S|$.
Notice that $d^{\pi_1}$ may be expressed in this notation as $(1-\gamma)(I - \gamma \Trans\Pi_1)^{-1}\init$. 
We may re-write the above term as
\begin{align}
    &\frac{1}{2}\mathbf{1}|(1-\gamma)(I - \gamma \Trans\Pi_2)^{-1}((I - \gamma\Trans\Pi_1) - (I - \gamma\Trans\Pi_2))(I - \gamma \Trans\Pi_1)^{-1}\init| \nonumber\\
    =& \gamma\cdot\frac{1}{2}\mathbf{1}|(I - \gamma \Trans\Pi_2)^{-1}(\Trans\Pi_2 - \Trans\Pi_1) d^{\pi_1}|.
\end{align}
Using matrix norm inequalities, we bound the above by
\begin{equation}
\gamma\cdot\frac{1}{2} \|(I - \gamma \Trans\Pi_2)^{-1}\|_{1,\infty}\cdot \mathbf{1}|(\Trans\Pi_2 - \Trans\Pi_1) d^{\pi_1}|.
\end{equation}
Since $\Trans\Pi_2$ is a stochastic matrix, $\|(I - \gamma \Trans\Pi_2)^{-1}\|_{1,\infty} \le \sum_{t=0}^\infty \gamma^t\|\Trans\Pi_2\|_{1,\infty} = (1-\gamma)^{-1}$. Thus, we bound the above by
\begin{equation}
\frac{\gamma}{2(1-\gamma)}\mathbf{1}|(\Trans\Pi_2 - \Trans\Pi_1) d^{\pi_1}| = \frac{\gamma}{1-\gamma} \err_{d^{\pi_1}}(\pi_1,\pi_2,\Trans),
\end{equation}
and so we immediately achieve the desired bound in~\eqref{eq:perf-diff-bound}.
\end{proof}

The divergence bound above relies on the true transition model $\Trans$ which is not available to us. We now introduce an approximate transition model $\Tmodel$ to proxy $\err_{d^{\pi_1}}(\pi_1,\pi_2,\Trans)$.

\begin{lemma}
\label{lem:model1}
For $\pi_1$ and $\pi_2$ two policies in $\mdp$ and any transition model $\Tmodel(\cdot|s, a)$ we have, 
\begin{align}
\err_{d^{\pi_1}}(\pi_1,\pi_2,\Trans) &\le |\Aset|\E_{(s,a)\sim (d^{\pi_1}, \uniform)}[\dtv(\Trans(s,a)\|\Tmodel(s,a))] + 
\err_{d^{\pi_1}}(\pi_1,\pi_2,\Tmodel).
\end{align}
\end{lemma}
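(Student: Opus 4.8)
\emph{Proof plan.}\ The plan is to insert the proxy model $\Tmodel$ between ``$\Trans$ under $\pi_1$'' and ``$\Trans$ under $\pi_2$'' via the pointwise decomposition
\begin{equation*}
\Trans(s'|s,a_1) - \Trans(s'|s,a_2) = \big(\Trans(s'|s,a_1) - \Tmodel(s'|s,a_1)\big) + \big(\Tmodel(s'|s,a_1) - \Tmodel(s'|s,a_2)\big) + \big(\Tmodel(s'|s,a_2) - \Trans(s'|s,a_2)\big),
\end{equation*}
then plug this into the definition of $\err_{d^{\pi_1}}(\pi_1,\pi_2,\Trans)$ and apply the triangle inequality to the $|\E[\cdot]|$ appearing there. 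The middle group of terms reassembles verbatim into $\err_{d^{\pi_1}}(\pi_1,\pi_2,\Tmodel)$, so all of the remaining work is in bounding the two ``model error'' groups.

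For the first group I would first move the absolute value inside the expectation ($|\E[X]|\le\E[|X|]$) and swap the finite sum over $s'$ with the expectation, noting that the $a_2$-expectation is vacuous since the summand does not depend on $a_2$; this converts $\tfrac12\sum_{s'}\big|\E_{s\sim d^{\pi_1},a_1\sim\pi_1(s)}[\Trans(s'|s,a_1)-\Tmodel(s'|s,a_1)]\big|$ into $\E_{s\sim d^{\pi_1},a_1\sim\pi_1(s)}\big[\dtv(\Trans(s,a_1)\|\Tmodel(s,a_1))\big]$, recognizing the inner sum as $2\dtv$. The symmetric argument handles the third group with $\pi_2$ and $a_2$ in place of $\pi_1$ and $a_1$.

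The crux is the change of measure from $\pi_i(\cdot|s)$ to $\uniform$: since $\pi_i(a|s)\le 1$ for every $a$ and the per-state-action quantity $\dtv(\Trans(s,a)\|\Tmodel(s,a))$ is nonnegative, $\E_{a\sim\pi_i(s)}[\dtv(\Trans(s,a)\|\Tmodel(s,a))] = \sum_a \pi_i(a|s)\,\dtv(\Trans(s,a)\|\Tmodel(s,a)) \le \sum_a \dtv(\Trans(s,a)\|\Tmodel(s,a)) = |\Aset|\,\E_{a\sim\uniform}[\dtv(\Trans(s,a)\|\Tmodel(s,a))]$. Taking the outer expectation over $s\sim d^{\pi_1}$ and collecting the three contributions yields the claimed bound. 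There is no conceptual obstacle here, but the step to watch is the ordering: the triangle-inequality reduction to the nonnegative per-$(s,a)$ TV terms must be performed \emph{before} the reweighting to $\uniform$, because the bound $\E_{\pi_i}[\cdot]\le|\Aset|\,\E_{\uniform}[\cdot]$ is only valid for nonnegative integrands and fails for the signed transition differences in the original expression; tracking the constant on this reweighting is also where one should be careful.
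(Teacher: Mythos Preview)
Your argument is correct as an upper bound, but it uses a different decomposition from the paper and, as written, loses a factor of $2$ on the model-error term. You split $\Trans(s'|s,a_1)-\Trans(s'|s,a_2)$ into three pieces and bound the two outer ``model error'' pieces separately; after the change of measure each contributes $|\Aset|\,\E_{(s,a)\sim(d^{\pi_1},\uniform)}[\dtv(\Trans(s,a)\|\Tmodel(s,a))]$, so your final inequality has $2|\Aset|$ in front, not the $|\Aset|$ stated in the lemma. The step ``collecting the three contributions yields the claimed bound'' glosses over this.

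The paper instead writes the integrand as $\sum_{a}\big(\Trans(s'|s,a)-\Tmodel(s'|s,a)\big)\big(\pi_1(a|s)-\pi_2(a|s)\big)+\sum_{a}\Tmodel(s'|s,a)\big(\pi_1(a|s)-\pi_2(a|s)\big)$. The second sum is exactly $\err_{d^{\pi_1}}(\pi_1,\pi_2,\Tmodel)$, and in the first sum one bounds $|\pi_1(a|s)-\pi_2(a|s)|\le 1$ once, giving $\sum_a\E_{s}[\dtv(\Trans(s,a)\|\Tmodel(s,a))]=|\Aset|\,\E_{(s,a)\sim(d^{\pi_1},\uniform)}[\dtv]$ with no extra factor. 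The moral difference: your three-term split pays for the model error under both $\pi_1$ and $\pi_2$, whereas the factored form $(\Trans-\Tmodel)(\pi_1-\pi_2)$ pays for it once against the \emph{difference} of the policies. Everything else in your plan (triangle inequality, pulling the absolute value inside before reweighting, the $\pi_i\to\uniform$ step for nonnegative integrands) is fine and matches the paper.
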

\begin{proof}
\begin{align}
    \err_{d^{\pi_1}}(\pi_1,\pi_2,\Trans) &= \frac{1}{2}\sum_{s'\in\Sset}\left|\E_{s\sim d^{\pi_1},a_1\sim\pi_1(s),a_2\sim\pi_2(s)}[\Trans(s'|s,a_1) - \Trans(s'|s,a_2)]\right|
\end{align}
\begin{align}
    &= \frac{1}{2}\sum_{s'\in\Sset}\left|\sum_{a\in\Aset}\E_{s\sim d^{\pi_1}}[\Trans(s'|s,a)\pi_1(a|s) - \Trans(s,a)\pi_2(a|s)]\right| \\
    &= \frac{1}{2}\sum_{s'\in\Sset}\left|\sum_{a\in\Aset}\E_{s\sim d^{\pi_1}}[(\Trans(s'|s,a) - \Tmodel(s'|s,a))(\pi_1(a|s) - \pi_2(a|s)) + \Tmodel(s'|s,a)(\pi_1(a|s) - \pi_2(a|s))]\right| \\
    &\le \frac{1}{2}\sum_{s'\in\Sset}\left|\sum_{a\in\Aset}\E_{s\sim d^{\pi_1}}[(\Trans(s'|s,a) - \Tmodel(s'|s,a))(\pi_1(a|s) - \pi_2(a|s))]\right| + \err_{d^{\pi_1}}(\pi_1,\pi_2,\Tmodel) \\
    &\le \frac{1}{2}\sum_{s'\in\Sset}\sum_{a\in\Aset}\E_{s\sim d^{\pi_1}}[\left|(\Trans(s'|s,a) - \Tmodel(s'|s,a))(\pi_1(a|s) - \pi_2(a|s))\right|] + \err_{d^{\pi_1}}(\pi_1,\pi_2,\Tmodel) \\
    &\le |\Aset|\E_{(s,a)\sim (d^{\pi_1}, \uniform)}[\dtv(\Trans(s'|s,a)\|\Tmodel(s'|s,a)|] + \err_{d^{\pi_1}}(\pi_1,\pi_2,\Tmodel),
\end{align}
and we arrive at the inequality as desired where the last step comes from $\dtv(\Trans(s,a)\|\Tmodel(s,a)) = \frac{1}{2}\sum_{s'\in\Sset}|\Trans(s'|s,a)-\Tmodel(s'|s,a)|$.
\end{proof}

Now we introduce a representation function $\phi:\Sset\times\Aset\to\Zset$ and show how the error above may be reduced when $\Tmodel(s,a)=\Trep(s, \phi(s, a))$:
\begin{lemma}
\label{lem:bottleneck}
Let $\phi:\Sset\times\Aset\to\Zset$ for some space $\Zset$ and suppose there exists $\Trep:\Sset\times\Zset\to\Delta(\Sset)$ such that $\Tmodel(s,a)=\Trep(s,\phi(s,a))$ for all $s\in\Sset,a\in\Aset$.
Then for any policies $\pi_1,\pi_2$,
\begin{align}
    \err_{d^{\pi_1}}(\pi_1,\pi_2,\Tmodel)] &\le \E_{s\sim d^{\pi_1}}[\dtv(\pi_{1,\Zset}\|\pi_{2,\Zset})],
\end{align}
where $\pi_{k,\Zset}(z|s)$ is the marginalization of $\pi_k$ onto $Z$:
\begin{align}
    \pi_{k,\Zset}(z|s) &\defeq \sum_{a\in\Aset, z=\phi(s,a)}\pi_k(a|s)
\end{align}
for all $z\in\Zset, k\in\{1,2\}$.
\end{lemma}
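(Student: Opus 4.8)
The plan is to unfold the definition of $\err_{d^{\pi_1}}(\pi_1,\pi_2,\Tmodel)$ using the factorization $\Tmodel(s,a)=\Trep(s,\phi(s,a))$, and then group the sum over actions $a\in\Aset$ according to which latent $z\in\Zset$ they map to under $\phi(s,\cdot)$. Concretely, I would write
\begin{align*}
\err_{d^{\pi_1}}(\pi_1,\pi_2,\Tmodel) &= \frac{1}{2}\sum_{s'\in\Sset}\left|\sum_{a\in\Aset}\E_{s\sim d^{\pi_1}}[\Trep(s'|s,\phi(s,a))(\pi_1(a|s) - \pi_2(a|s))]\right| \\
&= \frac{1}{2}\sum_{s'\in\Sset}\left|\sum_{z\in\Zset}\E_{s\sim d^{\pi_1}}\Big[\Trep(s'|s,z)\!\!\sum_{a:\,\phi(s,a)=z}\!\!(\pi_1(a|s) - \pi_2(a|s))\Big]\right|,
\end{align*}
where the key observation is that for a fixed $s$ and fixed $z$, $\Trep(s'|s,z)$ is constant in $a$, so it factors out of the inner sum over $\{a:\phi(s,a)=z\}$. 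Recognizing $\sum_{a:\phi(s,a)=z}\pi_k(a|s)=\pi_{k,\Zset}(z|s)$, the expression becomes $\frac{1}{2}\sum_{s'}|\E_{s\sim d^{\pi_1}}[\sum_z \Trep(s'|s,z)(\pi_{1,\Zset}(z|s)-\pi_{2,\Zset}(z|s))]|$.

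From here I would apply the triangle inequality to pull the sum over $s'$ and the expectation over $s$ outside the absolute value, obtaining an upper bound of $\frac{1}{2}\E_{s\sim d^{\pi_1}}[\sum_{s'}\sum_z \Trep(s'|s,z)\,|\pi_{1,\Zset}(z|s)-\pi_{2,\Zset}(z|s)|]$. Then I would swap the order of summation and use $\sum_{s'}\Trep(s'|s,z)=1$ since $\Trep(s,z)\in\Delta(\Sset)$, which collapses the $s'$-sum and leaves $\frac{1}{2}\E_{s\sim d^{\pi_1}}[\sum_z|\pi_{1,\Zset}(z|s)-\pi_{2,\Zset}(z|s)|] = \E_{s\sim d^{\pi_1}}[\dtv(\pi_{1,\Zset}(s)\|\pi_{2,\Zset}(s))]$, which is exactly the claimed bound. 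The only subtlety worth being careful about is whether $\phi(s,\cdot)$ partitions $\Aset$ (it does, since $\phi$ is a function, so every $a$ lands in exactly one fiber $\{a':\phi(s,a')=\phi(s,a)\}$), which ensures the regrouping of the action sum into a sum over $z$ loses nothing.

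I do not anticipate a genuine obstacle here — this is a short computation — but the step requiring the most care is the interchange of absolute value with the sum over $s'$ and expectation over $s$: one must check that the triangle inequality is applied in the right direction (moving $|\cdot|$ inward weakens the bound, which is what we want for an upper bound) and that nonnegativity of $\Trep(s'|s,z)$ is used so that $|\Trep(s'|s,z)(\pi_{1,\Zset}-\pi_{2,\Zset})| = \Trep(s'|s,z)|\pi_{1,\Zset}-\pi_{2,\Zset}|$. After that, the normalization $\sum_{s'}\Trep(s'|s,z)=1$ does all the remaining work, and the definition of $\dtv$ delivers the final form.
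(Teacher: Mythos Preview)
Your proposal is correct and follows essentially the same approach as the paper's proof: expand the definition, regroup the action sum by fibers of $\phi(s,\cdot)$ to introduce $\pi_{k,\Zset}$, apply the triangle inequality to move $|\cdot|$ inside the expectation over $s$ and the sum over $z$, and then use $\sum_{s'}\Trep(s'|s,z)=1$ to collapse to the expected TV divergence. The paper's argument is identical in structure and in every key step.
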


\begin{proof}
\begin{align}
    & \frac{1}{2}\sum_{s'\in\Sset}\left|\E_{s\sim d^{\pi_1},a_1\sim\pi_1(s),a_2\sim\pi_2(s)}[\Tmodel(s'|s,a_1) - \Tmodel(s'|s,a_2)]\right| \\
    &= \frac{1}{2}\sum_{s'\in\Sset}\left|\sum_{s\in\Sset,a\in\Aset}\Trep(s'|s,\phi(s,a))\pi_1(a|s)d^{\pi_1}(s) - \sum_{s\in\Sset,a\in\Aset}\Trep(s'|s,\phi(s,a))\pi_2(a|s)d^{\pi_1}(s)\right| \nonumber \\
    &=\frac{1}{2}\sum_{s'\in\Sset}\left|\sum_{s\in\Sset,z\in\Zset}\Trep(s'|s,z)\!\!\!\sum_{\substack{a\in\Aset,\\\phi(s,a)=z}}\pi_1(a|s)d^{\pi_1}(s) - \sum_{s\in\Sset,z\in\Zset}\Trep(s'|s,z)\sum_{\substack{a\in\Aset,\\\phi(s,a)=z}}\pi_2(a|s)d^{\pi_1}(s)\right| \nonumber \\
    &=\frac{1}{2}\sum_{s'\in\Sset}\left|\sum_{s\in\Sset,z\in\Zset}\Trep(s'|s,z)\pi_{1,\Zset}(z|s) d^{\pi_1}(s) - \sum_{s\in\Sset,z\in\Zset}\Trep(s'|s,z)\pi_{2,\Zset}(z|s)d^{\pi_1}(s)\right| \nonumber \\
    &=\frac{1}{2}\sum_{s'\in\Sset}\left|\E_{s\sim d^{\pi_1}}\left[\sum_{z\in\Zset}\Trep(s'|s,z)(\pi_{1,\Zset}(z|s) - \pi_{2,\Zset}(z|s))\right]\right| \\
    &\le \frac{1}{2}\E_{s\sim d^{\pi_1}}\left[\sum_{z\in\Zset}\sum_{s'\in\Sset}\Trep(s'|s,z)\left|\pi_{1,\Zset}(z|s) - \pi_{2,\Zset}(z|s)\right|\right] \\
    &= \frac{1}{2}\E_{s\sim d^{\pi_1}}\left[\sum_{z\in\Zset}\left|\pi_{1,\Zset}(z|s) - \pi_{2,\Zset}(z|s)\right|\right] \\
    &=\E_{s\sim d^{\pi_1}}\left[\dtv(\pi_{1,\Zset}\|\pi_{2,\Zset})\right],
\end{align}
and we arrive at the inequality as desired. 
\end{proof}

\begin{lemma}\label{lem:marginal-opt}
Let $d\in\Delta(\Sset,\Aset)$ be some state-action distribution, $\phi:\Sset\times\Aset\to \Zset$, and $\pirep:\Sset\to\Delta(\Zset)$.
Denote $\pi_{\alpha^*}$ as the optimal action decoder for $d,\phi$: 
\begin{equation*}
\pi_{\alpha^*}(a|s,z) = \frac{d(s,a)\cdot\mathbbm{1}[z=\phi(s,a)]}{\sum_{a'\in\Aset }d(s,a')\cdot\mathbbm{1}[z=\phi(s,a')]},
\end{equation*}
and $\pi_{\alpha^*,\Zset}$ as the marginalization of $\pi_{\alpha^*}\circ\pirep$ onto $\Zset$:
\begin{equation*}
    \pi_{\alpha^*,\Zset}(z|s) \defeq \sum_{a\in\Aset, z=\phi(s,a)} (\pi_{\alpha^*}\circ\pirep)(a|s) = \sum_{a\in\Aset, z=\phi(s,a)}\sum_{\tilde z\in\Zset}\pi_{\alpha^*}(a|s, \tilde z)\pirep(\tilde z|s).
\end{equation*}
Then we have
\begin{equation}
    \pi_{\alpha^*,\Zset}(z|s) = \pirep(z|s)
\end{equation}
for all $z\in\Zset$ and $s\in\Sset$.
\begin{proof}
\begin{align}
     \pi_{\alpha^*,\Zset}(z|s) 
     &= \sum_{a\in\Aset, z=\phi(s,a)}\sum_{\tilde z\in\Zset}\pi_{\alpha^*}(a|s, \tilde z)\pirep(\tilde z|s) \\
     &= \sum_{a\in\Aset, z=\phi(s,a)}\sum_{\tilde z\in\Zset}\frac{d(s,a)\cdot\mathbbm{1}[\tilde z=\phi(s,a)]}{\sum_{a'\in\Aset }d(s,a')\cdot\mathbbm{1}[\tilde z=\phi(s,a')]}\pirep(\tilde z|s) \\
     &= \sum_{a\in\Aset, z=\phi(s,a)}\frac{d(s,a)\cdot\mathbbm{1}[ z=\phi(s,a)]}{\sum_{a'\in\Aset }d(s,a')\cdot\mathbbm{1}[ z=\phi(s,a')]}\pirep(z|s) \\
     &= \pirep(z|s) \sum_{a\in\Aset, z=\phi(s,a)}\frac{d(s,a)\cdot\mathbbm{1}[ z=\phi(s,a)]}{\sum_{a'\in\Aset }d(s,a')\cdot\mathbbm{1}[ z=\phi(s,a')]} \\
     &= \pirep(z|s),
\end{align}
and we have the desired equality.
\end{proof}
\end{lemma}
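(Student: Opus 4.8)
The plan is to observe that the optimal decoder $\pi_{\alpha^*}(a|s,z)$ is, by construction, a probability distribution over actions supported entirely on the $\phi$-fiber $\{a\in\Aset : \phi(s,a)=z\}$, so marginalizing $\pi_{\alpha^*}\circ\pirep$ back onto $\Zset$ must just recover $\pirep$. To make this precise I would start from the definition
\[
\pi_{\alpha^*,\Zset}(z|s) = \sum_{a\in\Aset,\, z=\phi(s,a)}\ \sum_{\tilde z\in\Zset}\pi_{\alpha^*}(a|s,\tilde z)\,\pirep(\tilde z|s),
\]
and exploit the indicator $\mathbbm{1}[\tilde z = \phi(s,a)]$ appearing inside $\pi_{\alpha^*}(a|s,\tilde z)$: since every $a$ in the outer sum satisfies $\phi(s,a)=z$, the factor $\pi_{\alpha^*}(a|s,\tilde z)$ vanishes for all $\tilde z\neq z$, so the inner sum over $\tilde z$ collapses to the single term $\tilde z=z$. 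This turns the right-hand side into $\pirep(z|s)\cdot\sum_{a\in\Aset,\,\phi(s,a)=z}\pi_{\alpha^*}(a|s,z)$.

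The remaining step is the normalization check that $\sum_{a\in\Aset,\,\phi(s,a)=z}\pi_{\alpha^*}(a|s,z)=1$. Summing the defining formula for $\pi_{\alpha^*}$ over exactly those $a$ with $\phi(s,a)=z$ gives the ratio of $\sum_{a:\,\phi(s,a)=z} d(s,a)$ to $\sum_{a':\,\phi(s,a')=z} d(s,a')$, which equals $1$ because numerator and denominator are literally the same sum. Substituting this back leaves $\pi_{\alpha^*,\Zset}(z|s)=\pirep(z|s)$, and since $z\in\Zset$ and $s\in\Sset$ were arbitrary, this is the claimed identity.

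I do not expect a genuine obstacle here: the argument is a two-line manipulation of the indicator followed by a normalization check. The only point that warrants a remark is the degenerate case in which $\sum_{a'\in\Aset} d(s,a')\,\mathbbm{1}[z=\phi(s,a')]=0$, i.e. $z$ lies outside the image of $\phi(s,\cdot)$ on the support of $d(s,\cdot)$; then the formula for $\pi_{\alpha^*}$ is $0/0$ and must be fixed by convention (any distribution on the preimage, or the zero measure). This case is irrelevant whenever $\pirep(\cdot|s)$ is supported on the realized latent actions, which is the regime in which the lemma is invoked inside Theorem~\ref{thm:tabular}; I would flag this mild support assumption and then carry out the clean computation above.
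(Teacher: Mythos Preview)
Your proposal is correct and follows essentially the same approach as the paper: collapse the inner sum over $\tilde z$ using the indicator $\mathbbm{1}[\tilde z=\phi(s,a)]$ (since the outer sum restricts to $\phi(s,a)=z$), factor out $\pirep(z|s)$, and then observe that the remaining sum is the normalization of $\pi_{\alpha^*}(\cdot|s,z)$ and hence equals $1$. Your additional remark on the degenerate $0/0$ case is a reasonable caveat that the paper leaves implicit.
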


\begin{lemma}
\label{lem:decode}
Let $\pirep:\Sset\to\Delta(\Zset)$ be a latent policy in $\Zset$ and $\pidec:\Sset\times\Zset\to\Aset$ be an action decoder, $\pi_{\alpha,\Zset}$ be the marginalization of $\pidec\circ\pirep$ onto $\Zset$:
\begin{equation*}
    \pi_{\alpha,\Zset}(z|s) \defeq \sum_{a\in\Aset, z=\phi(s,a)} (\pidec\circ\pirep)(a|s) = \sum_{a\in\Aset, z=\phi(s,a)}\sum_{\tilde z\in\Zset}\pi_{\alpha}(a|s, \tilde z)\pirep(\tilde z|s).
\end{equation*}
Then for any $s\in\Sset$ we have
\begin{align}
    \dtv(\pirep(s)\|\pi_{\alpha,\Zset}(s)) \le  \max_{z\in\Zset}\dtv(\pi_{\alpha^*}(s, z)\|\pidec(s, z)),
\end{align}
where $\pi_{\alpha^*}$ is the optimal action decoder defined in Lemma~\ref{lem:marginal-opt} (and this holds for any choice of $d$ from Lemma~\ref{lem:marginal-opt}).
\begin{proof}
\begin{align}
    &\dtv(\pirep(s)\|\pi_{\alpha,\Zset}(s)) \\
    =& \frac{1}{2}\sum_{z\in\Zset}\left|\pirep(z|s) - \pi_{\alpha,\Zset}(z|s)\right|\\
    =& \frac{1}{2}\sum_{z\in\Zset}\left|\pirep(z|s) - \sum_{a\in\Aset, z=\phi(s,a)}\sum_{\tilde z\in\Zset}\pi_{\alpha}(a|s, \tilde z)\pirep(\tilde z|s)\right|\\
    =& \frac{1}{2}\sum_{z\in\Zset}\left|\pirep(z|s) - \sum_{a\in\Aset, z=\phi(s,a)}\sum_{\tilde z\in\Zset}\left(\pi_{\alpha}(a|s, \tilde z) - \pi_{\alpha^*}(a|s, \tilde z) + \pi_{\alpha^*}(a|s, \tilde z)\right)\pirep(\tilde z|s)\right|\\
    =& \frac{1}{2}\sum_{z\in\Zset}\left|\sum_{a\in\Aset, z=\phi(s,a)}\sum_{\tilde z\in\Zset}\left(\pi_{\alpha}(a|s, \tilde z) - \pi_{\alpha^*}(a|s, \tilde z)\right)\pirep(\tilde z|s)\right| ~~~~~~~~~~~~~\text{(by Lemma~\ref{lem:marginal-opt})}\\
    \le& \frac{1}{2}\E_{\tilde z\sim\pirep(s)}\left[\sum_{z\in\Zset}\sum_{a\in\Aset, z=\phi(s,a)}\left|\pi_{\alpha}(a|s, \tilde z) - \pi_{\alpha^*}(a|s, \tilde z)\right|\right]\\
    =& \frac{1}{2}\E_{\tilde z\sim\pirep(s)}\left[\sum_{a\in\Aset}\left|\pi_{\alpha}(a|s, \tilde z) - \pi_{\alpha^*}(a|s, \tilde z)\right|\right]\\
    =& \E_{\tilde z\sim\pirep(s)}\left[\dtv(\pidec(s,\tilde z)\|, \pi_{\alpha^*}(s, \tilde z))\right]\\
    \le& \max_{z\in\Zset}\dtv(\pidec(s,z)\|\pi_{\alpha^*}(s, z)),\\
\end{align}
and we have the desired inequality.
\end{proof}
\end{lemma}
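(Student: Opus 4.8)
The plan is to begin from the definition of total variation, writing $\dtv(\pirep(s)\,\|\,\pi_{\alpha,\Zset}(s)) = \tfrac12\sum_{z\in\Zset}\bigl|\pirep(z|s) - \pi_{\alpha,\Zset}(z|s)\bigr|$, and to substitute the explicit formula $\pi_{\alpha,\Zset}(z|s) = \sum_{a\in\Aset:\,\phi(s,a)=z}\sum_{\tilde z\in\Zset}\pidec(a|s,\tilde z)\,\pirep(\tilde z|s)$. The central idea is to bring in the optimal decoder by the telescoping substitution $\pidec(a|s,\tilde z) = \bigl(\pidec(a|s,\tilde z) - \pi_{\alpha^*}(a|s,\tilde z)\bigr) + \pi_{\alpha^*}(a|s,\tilde z)$. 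The portion involving $\pi_{\alpha^*}$ reassembles precisely into $\pi_{\alpha^*,\Zset}(z|s)$, which by Lemma~\ref{lem:marginal-opt} equals $\pirep(z|s)$ and therefore cancels the leading term, leaving only the decoder discrepancy $\pidec - \pi_{\alpha^*}$ inside the absolute value.

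From there I would apply the triangle inequality to move the absolute value past the sum over $a$ with $\phi(s,a)=z$ and past the averaging over $\tilde z\sim\pirep(s)$, arriving at an upper bound of the form $\tfrac12\,\E_{\tilde z\sim\pirep(s)}\bigl[\sum_{z\in\Zset}\sum_{a\in\Aset:\,\phi(s,a)=z}\bigl|\pidec(a|s,\tilde z) - \pi_{\alpha^*}(a|s,\tilde z)\bigr|\bigr]$. The key bookkeeping step is that the nested sum $\sum_{z\in\Zset}\sum_{a\in\Aset:\,\phi(s,a)=z}$ enumerates each $a\in\Aset$ exactly once, since every action has a unique image $z=\phi(s,a)$; hence it collapses to $\sum_{a\in\Aset}$. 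The inner quantity then becomes $\sum_{a\in\Aset}\bigl|\pidec(a|s,\tilde z) - \pi_{\alpha^*}(a|s,\tilde z)\bigr| = 2\,\dtv(\pidec(s,\tilde z)\,\|\,\pi_{\alpha^*}(s,\tilde z))$, so the whole bound reduces to $\E_{\tilde z\sim\pirep(s)}\bigl[\dtv(\pidec(s,\tilde z)\,\|\,\pi_{\alpha^*}(s,\tilde z))\bigr]$.

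The final step is simply to bound this expectation by its supremum, $\max_{z\in\Zset}\dtv(\pidec(s,z)\,\|\,\pi_{\alpha^*}(s,z))$, which gives the claimed inequality. The step I expect to require the most care is the cancellation via Lemma~\ref{lem:marginal-opt}: one must check that after the telescoping substitution the $\pi_{\alpha^*}$-terms are grouped in exactly the form $\sum_{a:\,\phi(s,a)=z}\sum_{\tilde z}\pi_{\alpha^*}(a|s,\tilde z)\pirep(\tilde z|s)$ demanded by that lemma, rather than a merely similar-looking expression; the remaining manipulations — the triangle inequality and the index collapse — are routine.
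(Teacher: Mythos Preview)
Your proposal is correct and follows essentially the same route as the paper: start from the TV definition, expand $\pi_{\alpha,\Zset}$, insert $\pi_{\alpha^*}$ by add-and-subtract, cancel $\pirep(z|s)$ via Lemma~\ref{lem:marginal-opt}, push the absolute value inside, collapse $\sum_{z}\sum_{a:\phi(s,a)=z}$ to $\sum_{a}$, and bound the resulting expectation by the max. The step you flagged as requiring care is exactly the one the paper justifies by invoking Lemma~\ref{lem:marginal-opt}.
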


\begin{lemma}
\label{lem:tvs}
Let $\pi_{1,\Zset}$ be the marginalization of $\pi_1$ onto $\Zset$ as defined in Lemma~\ref{lem:bottleneck}, and let $\pirep$, $\pidec$, $\pi_{\alpha,\Zset}$ be as defined in Lemma~\ref{lem:decode}, and let $\pi_{\alpha^*,\Zset}$ be as defined in Lemma~\ref{lem:marginal-opt}. For any $s\in\Sset$ we have
\begin{equation}
    \dtv(\pi_{1, \Zset}(s)\|\pi_{\alpha, \Zset}(s)) \le \max_{z\in\Zset}\dtv(\pidec(s,z)\|\pi_{\alpha^*}(s, z)) + \dtv(\pi_{1, \Zset}(s)\|\pirep(s)).
\end{equation}
\begin{proof}
    The desired inequality is achieved by plugging the inequality from Lemma~\ref{lem:decode} into the following triangle inequality:
\begin{equation}
    \dtv(\pi_{1, \Zset}(s)\|\pi_{\alpha, \Zset}(s)) \le \dtv(\pirep(s)\|\pi_{\alpha,\Zset}(s)) + \dtv(\pi_{1, \Zset}(s)\|\pirep(s)).
\end{equation}
\end{proof}
\end{lemma}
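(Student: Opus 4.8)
The plan is to prove Lemma~\ref{lem:tvs} directly from the triangle inequality, using the fact that total variation distance is a genuine metric on the space of probability distributions over $\Zset$ (it equals one half of the $\ell_1$ distance, and $\|\cdot\|_1$ satisfies the triangle inequality). The natural intermediate point to insert is the learned latent policy $\pirep(s)$, since that is exactly the distribution that Lemma~\ref{lem:decode} relates to the marginalized decoded policy $\pi_{\alpha,\Zset}(s)$. Concretely, the first step is to write
\[
\dtv(\pi_{1,\Zset}(s)\|\pi_{\alpha,\Zset}(s)) \le \dtv(\pi_{1,\Zset}(s)\|\pirep(s)) + \dtv(\pirep(s)\|\pi_{\alpha,\Zset}(s)).
\]

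The second step is to bound the last term on the right using Lemma~\ref{lem:decode}, which gives $\dtv(\pirep(s)\|\pi_{\alpha,\Zset}(s)) \le \max_{z\in\Zset}\dtv(\pidec(s,z)\|\pi_{\alpha^*}(s,z))$. Substituting this into the triangle inequality immediately yields the claimed bound. One should note that the objects $\pi_{\alpha,\Zset}$ and $\pi_{\alpha^*}$ appearing in the hypothesis of this lemma are defined exactly as in Lemma~\ref{lem:decode} and Lemma~\ref{lem:marginal-opt}, so there is no mismatch in definitions; moreover, Lemma~\ref{lem:decode} is stated for an arbitrary underlying distribution $d$ defining $\pi_{\alpha^*}$, so it applies verbatim here. (The quantity $\pi_{\alpha^*,\Zset}$ listed in the hypotheses is not actually needed for the inequality itself; it enters only implicitly through the proof of Lemma~\ref{lem:decode}.)

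There is essentially no obstacle here: the lemma is a one-line consequence of a triangle inequality and a previously established bound. The only point worth checking is that TV distance is subadditive in the required sense, which is routine — it follows from $\dtv(p\|q) = \tfrac{1}{2}\|p - q\|_1$ and the triangle inequality for the $\ell_1$ norm on $\R^{|\Zset|}$. This lemma is then meant to be combined with Lemmas~\ref{lem:performance}--\ref{lem:bottleneck} (instantiating $\pi_1 = \pitarget$, $\pi_2 = \pidec\circ\pirep$, and $\Tmodel = \Trep\circ\phi$) to assemble the three-term bound of Theorem~\ref{thm:tabular}.
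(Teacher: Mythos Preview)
Your proposal is correct and follows exactly the paper's approach: apply the triangle inequality for $\dtv$ with $\pirep(s)$ as the intermediate point, then invoke Lemma~\ref{lem:decode} to bound $\dtv(\pirep(s)\|\pi_{\alpha,\Zset}(s))$ by $\max_{z\in\Zset}\dtv(\pidec(s,z)\|\pi_{\alpha^*}(s,z))$. Your additional remarks (that $\dtv$ is a metric via the $\ell_1$ norm, and that $\pi_{\alpha^*,\Zset}$ is not used directly) are accurate and harmless.
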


Our final lemma will be used to translate on-policy bounds to off-policy.
\begin{lemma}
\label{lem:off-policy}
For two distributions $\rho_1,\rho_2\in\Delta(\Sset)$ with $\rho_1(s)>0 \Rightarrow \rho_2(s) > 0$, we have,
\begin{equation}
    \E_{\rho_1}[h(s)] \le (1 + \dchi(\rho_1\|\rho_2)^\frac{1}{2}) \sqrt{\E_{\rho_2}[h(s)^2]}. 
\end{equation}
\end{lemma}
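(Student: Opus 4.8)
The plan is to pass from $\rho_1$ to $\rho_2$ by importance weighting and then isolate the part of the importance ratio that deviates from $1$. Since the coverage assumption $\rho_1(s)>0\Rightarrow\rho_2(s)>0$ makes the ratio $\rho_1(s)/\rho_2(s)$ well-defined on the support of $\rho_1$, I would write $\E_{\rho_1}[h(s)] = \E_{\rho_2}[(\rho_1(s)/\rho_2(s))\,h(s)]$ and then split $\rho_1(s)/\rho_2(s) = 1 + (\rho_1(s)/\rho_2(s) - 1)$, giving $\E_{\rho_1}[h(s)] = \E_{\rho_2}[h(s)] + \E_{\rho_2}[(\rho_1(s)/\rho_2(s)-1)\,h(s)]$.

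Next I would bound the two pieces separately against $\sqrt{\E_{\rho_2}[h(s)^2]}$. For the first piece, Jensen's inequality (equivalently Cauchy--Schwarz against the constant $1$) gives $\E_{\rho_2}[h(s)] \le \E_{\rho_2}[|h(s)|] \le \sqrt{\E_{\rho_2}[h(s)^2]}$. For the second piece, Cauchy--Schwarz under $\rho_2$ gives $\E_{\rho_2}[(\rho_1(s)/\rho_2(s)-1)\,h(s)] \le \sqrt{\E_{\rho_2}[(\rho_1(s)/\rho_2(s)-1)^2]}\cdot\sqrt{\E_{\rho_2}[h(s)^2]}$, and the first factor is exactly $\dchi(\rho_1\|\rho_2)^{1/2}$ by the identity $\dchi(\rho_1\|\rho_2) = \sum_{s\in\Sset} (\rho_1(s)-\rho_2(s))^2/\rho_2(s) = \E_{\rho_2}[(\rho_1(s)/\rho_2(s)-1)^2]$. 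Adding the two bounds yields the claimed inequality.

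There is essentially no real obstacle here: the only points to keep straight are that splitting off the constant $1$ from the importance ratio is what lets a $\chi^2$ term (rather than a crude ratio bound) produce the multiplicative constant, and that the statement is only interesting when $\E_{\rho_1}[h(s)]\ge 0$ --- if it is negative the right-hand side, being nonnegative, dominates trivially, so no absolute value on the left is needed. This lemma would then be invoked later with $h(s)$ instantiated as the various nonnegative per-state TV/KL error terms, $\rho_1 = \visittarget$ and $\rho_2 = \visitrb$, to convert the on-policy ($\visittarget$-weighted) bounds coming from Lemmas~\ref{lem:performance}--\ref{lem:tvs} into $\visitrb$-weighted bounds, thereby producing the $\dchi(\visittarget\|\visitrb)^{1/2}$ factors appearing in $C_1$ and $C_2$ of Theorem~\ref{thm:tabular}.
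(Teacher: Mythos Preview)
Your proposal is correct and essentially identical to the paper's proof: both add and subtract $\E_{\rho_2}[h(s)]$, apply Cauchy--Schwarz to the difference term to extract the $\dchi(\rho_1\|\rho_2)^{1/2}$ factor, and bound $\E_{\rho_2}[h(s)]\le\sqrt{\E_{\rho_2}[h(s)^2]}$ via Jensen. The only cosmetic difference is that you phrase the Cauchy--Schwarz step as an inner product under $\rho_2$ of $(\rho_1/\rho_2-1)$ and $h$, whereas the paper writes the same step as a sum with the factorization $\frac{\rho_1(s)-\rho_2(s)}{\rho_2(s)^{1/2}}\cdot\rho_2(s)^{1/2}h(s)$.
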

\begin{proof}
The lemma is a straightforward consequence of Cauchy-Schwartz:
\begin{align}
    \E_{\rho_1}[h(s)] &= \E_{\rho_2}[h(s)] + (\E_{\rho_1}[h(s)] - \E_{\rho_2}[h(s)]) \\
    &= \E_{\rho_2}[h(s)] + \sum_{s\in\Sset}\frac{\rho_1(s) - \rho_2(s)}{\rho_2(s)^{\frac{1}{2}}}\cdot \rho_2(s)^{\frac{1}{2}} h(s) \\
    &\le \E_{\rho_2}[h(s)] + \left(\sum_{s\in\Sset}\frac{(\rho_1(s) - \rho_2(s))^2}{\rho_2(s)}\right)^{\frac{1}{2}}\cdot \left(\sum_{s\in\Sset}\rho_2(s) h(s)^2\right)^{\frac{1}{2}} \\
    &= \E_{\rho_2}[h(s)] + \dchi(\rho_1\|\rho_2)^{\frac{1}{2}}\cdot \sqrt{\E_{\rho_2}[h(s)^2]}.
\end{align}
Finally, to get the desired bound, we simply note that the concavity of the square-root function implies $\E_{\rho_2}[h(s)] \le \E_{\rho_2}[\sqrt{h(s)^2}] \le \sqrt{\E_{\rho_2}[h(s)^2]}$.
\end{proof}

\section{Proofs for Major Theorems}
\subsection{Proof of Theorem~\ref{thm:tabular}}
\begin{proof}
Let $\pi_2 \defeq \pidec\circ\pirep$, we have $\pi_{2,\Zset}(z|s)$ = $\pi_{\alpha,\Zset}(z|s) = \sum_{a\in\Aset,\phi(s,a)=z} (\pidec\circ\pirep)(z|s)$. By plugging the result of Lemma~\ref{lem:tvs} into Lemma~\ref{lem:bottleneck}, we have
\begin{equation}
\err_{d^{\pi_1}}(\pi_1,\pi_2,\Tmodel)] \le \E_{s\sim d^{\pi_1}}\left[\max_{z\in\Zset}\dtv(\pi_{\alpha^*}(s, z)\|\pidec(s,z)) + \dtv(\pi_{1, \Zset}(s)\|\pirep(s))\right].
\end{equation}
By plugging this result into Lemma~\ref{lem:model1}, we have
\begin{align}
\err_{d^{\pi_1}}(\pi_1,\pi_2,\Trans) &\le |\Aset|\E_{(s,a)\sim (d^{\pi_1}, \uniform)}[\dtv(\Trans(s,a)\|\Tmodel(s,a))]\\ 
&+ \E_{s\sim d^{\pi_1}}\left[\max_{z\in\Zset}\dtv(\pi_{\alpha^*}(s, z)\|\pidec(s,z))\right]\\ 
&+ \E_{s\sim d^{\pi_1}}\left[\dtv(\pi_{1, \Zset}(s)\|\pirep(s))\right].
\end{align}
By further plugging this result into Lemma~\ref{lem:performance} and let $\pi_1 = \pitarget$, we have: 
\begin{align}\label{eq:onpolicy}
\Diff(\pidec\circ\pirep,\pitarget) &\leq
\frac{\gamma|A|}{1-\gamma}\cdot\E_{(s,a)\sim (d^{\pi_1}, \uniform)}[\dtv(\Trans(s,a)\|\Trep(s,\phi(s,a))] \nonumber\\
&+ \frac{\gamma}{1-\gamma} \cdot \E_{s\sim\visittarget}[\max_{z\in\Zset}\dtv(\pi_{\alpha^*}(s,z)\|\pidec(s,z))] \nonumber\\
 & + \frac{\gamma}{1-\gamma} \cdot\E_{s\sim \visittarget}[\dtv(\pi_{*,Z}(s)\|\pirep(s))].
\end{align}
Finally, by plugging in the off-policy results of Lemma~\ref{lem:off-policy} to the bound in~\Eqref{eq:onpolicy} and by applying Pinsker's inequality $\dtv(\Trans(s,a)\|\Trep(s,\phi(s,a)))^2\le \frac{1}{2}\dkl(\Trans(s,a)\|\Trep(s,\phi(s,a)))$, we have
\begin{align}
\Diff(\pidec\circ\pirep,\pitarget) &\leq C_1 \cdot\sqrtexplained{\frac{1}{2}\underbrace{\E_{(s,a)\sim\visitrb}\left[\dkl(\Trans(s,a)\|\Trep(s, \phi(s,a)))\right]}_{\displaystyle=\jtrans(\Trep, \phi)}}\nonumber\\
&+ C_2 \cdot\sqrtexplained{\frac{1}{2}\underbrace{\E_{s\sim\visitrb}[\max_{z\in\Zset}
\dkl(\pi_{\alpha^*}(s,z)\|\pidec(s,z))]}_{\displaystyle\approx~\const(\visitrb,\phi) + \jbcdec(\pidec,\phi)}} \nonumber\\
 & + C_3\cdot\sqrtexplained{\frac{1}{2}\underbrace{\E_{s\sim \visittarget}[\dkl(\pi_{*,Z}(s)\|\pirep(s))]}_{\displaystyle =~\const(\pitarget,\phi) + \jbcrep(\pirep)}},
\end{align}

where $C_1 = \gamma|A|(1-\gamma)^{-1}(1+\dchi(\visittarget\|\visitrb)^{\frac{1}{2}})$, $C_2=\gamma(1-\gamma)^{-1}(1+\dchi(\visittarget\|\visitrb)^{\frac{1}{2}})$, and $C_3=\gamma(1-\gamma)^{-1}$. Since the $\max_{z\in\Zset}$ is not tractable in practice, we approximate $\E_{s\sim\visitrb}[\max_{z\in\Zset}
\dkl(\pi_{\alpha^*}(s,z)\|\pidec(s,z))]$ using $\E_{(s,a)\sim\visitrb}[
\dkl(\pi_{\alpha^*}(s,\phi(s,a))\|\pidec(s,\phi(s,a)))]$, which reduces to $\jbcdec(\pidec,\phi)$ with additional constants. We now arrive at
the desired off-policy bound in Theorem~\ref{thm:tabular}.
\end{proof}

\subsection{Proof of Theorem~\ref{thm:sample}}
\begin{lemma}
\label{lem:empirical-tv}
Let $\rho\in\Delta(\{1,\dots,k\})$ be a distribution with finite support. Let $\widehat{\rho}_n$ denote the empirical estimate of $\rho$ from $n$ i.i.d. samples $X\sim\rho$. Then,
\begin{equation}
    \E_n[\dtv(\rho\|\widehat{\rho}_n)] \le \frac{1}{2}\cdot\frac{1}{\sqrt{n}}\sum_{i=1}^k \sqrt{\rho(i)} \le \frac{1}{2}\cdot\sqrt{\frac{k}{n}}.
\end{equation}
\end{lemma}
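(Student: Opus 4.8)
The plan is to bound the expected total variation distance by passing through an $L^1$-to-$L^2$ comparison and then controlling each coordinate's mean absolute deviation by its standard deviation. Concretely, write $\dtv(\rho\|\widehat\rho_n) = \tfrac12\sum_{i=1}^k |\widehat\rho_n(i) - \rho(i)|$ and take expectations over the $n$ samples. Since expectation commutes with the finite sum,
\begin{equation*}
\E_n[\dtv(\rho\|\widehat\rho_n)] = \frac{1}{2}\sum_{i=1}^k \E_n\bigl[|\widehat\rho_n(i) - \rho(i)|\bigr].
\end{equation*}
For each fixed $i$, the count $n\widehat\rho_n(i)$ is a $\mathrm{Binomial}(n,\rho(i))$ random variable, so $\widehat\rho_n(i)$ is an unbiased estimator of $\rho(i)$ with variance $\rho(i)(1-\rho(i))/n$. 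By Jensen's inequality (concavity of $\sqrt{\cdot}$), $\E_n[|\widehat\rho_n(i)-\rho(i)|] \le \sqrt{\E_n[(\widehat\rho_n(i)-\rho(i))^2]} = \sqrt{\rho(i)(1-\rho(i))/n} \le \sqrt{\rho(i)/n}$. Summing over $i$ gives the first inequality, $\E_n[\dtv(\rho\|\widehat\rho_n)] \le \tfrac12 \cdot \tfrac{1}{\sqrt n}\sum_{i=1}^k\sqrt{\rho(i)}$.

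For the second inequality I would apply Cauchy--Schwarz to the vector $(\sqrt{\rho(1)},\dots,\sqrt{\rho(k)})$ against the all-ones vector: $\sum_{i=1}^k \sqrt{\rho(i)} = \sum_{i=1}^k \sqrt{\rho(i)}\cdot 1 \le \bigl(\sum_{i=1}^k \rho(i)\bigr)^{1/2}\bigl(\sum_{i=1}^k 1\bigr)^{1/2} = \sqrt{k}$, using $\sum_i \rho(i) = 1$. Combining yields $\E_n[\dtv(\rho\|\widehat\rho_n)] \le \tfrac12\sqrt{k/n}$, as claimed.

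There is no real obstacle here: the only mild subtlety is making sure the per-coordinate second-moment computation is clean (recognizing the binomial variance and discarding the $(1-\rho(i))$ factor), and keeping track that the two applications of a Jensen/Cauchy--Schwarz type inequality go in the correct directions. Everything else is routine. This lemma then feeds into the proof of Theorem~\ref{thm:sample} by instantiating $\rho = \pi_{*,\Zset}(\cdot|s)$ with support size $k = |\Zset|$ for each state $s$, and combining with the per-state bound and an outer sum over $|\Sset|$ states.
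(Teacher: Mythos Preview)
Your proof is correct. The paper's own proof is terse: it cites Lemma~8 of \cite{berend2012convergence} for the first inequality and invokes concavity of $\sqrt{\cdot}$ for the second. Your binomial-variance-plus-Jensen argument is exactly the standard computation underlying that citation, and your Cauchy--Schwarz step for $\sum_i\sqrt{\rho(i)}\le\sqrt{k}$ is equivalent to the paper's concavity/Jensen phrasing (apply Jensen to the uniform average of the $\rho(i)$'s). So the mathematical content is the same; you have simply made it self-contained.

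One minor inaccuracy in your closing remark on downstream use: the paper does not instantiate this lemma per-state with $\rho=\pi_{*,\Zset}(\cdot\mid s)$. Instead it passes through Lemma~\ref{lem:tv-bc}, which bounds $\E_{s\sim\rho}[\dtv(\pi(s)\|\widehat\pi(s))]$ by $\dtv(\rho\|\widehat\rho)+\dtv(x\|\widehat x)$ for the joint $x(s,a)=\rho(s)\pi(a|s)$, and then applies the present lemma once to the $\Sset$-marginal and once to the joint on $\Sset\times\Zset$, giving the $\sqrt{|\Sset||\Zset|/n}$ factor directly. This does not affect the correctness of your proof of the lemma itself.
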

\begin{proof}
The first inequality is Lemma 8 in~\cite{berend2012convergence} while the second inequality is due to the concavity of the square root function.
\end{proof}

\begin{lemma}
\label{lem:tv-bc}
Let $\Dset\defeq\{(s_i,a_i)\}_{i=1}^n$ be i.i.d. samples from a factored distribution $x(s,a)\defeq\rho(s)\pi(a|s)$ for $\rho\in\Delta(\Sset),\pi:\Sset\to\Delta(\Aset)$. Let $\widehat{\rho}$ be the empirical estimate of $\rho$ in $\Dset$ and $\widehat{\pi}$ be the empirical estimate of $\pi$ in $\Dset$.
Then,
\begin{equation}
    \E_{\Dset}[\E_{s\sim\rho}[\dtv(\pi(s)\|\widehat{\pi}(s))]] \le \sqrt{\frac{|\Sset||\Aset|}{n}}.
\end{equation}
\end{lemma}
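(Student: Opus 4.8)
The plan is to decompose the total variation error between $\pi$ and its empirical estimate $\widehat{\pi}$ into a sum over states, and then control each state's contribution using the i.i.d.\ sample count that happened to land on that state. First I would write $\dtv(\pi(s)\|\widehat{\pi}(s))$ explicitly; here $\widehat{\pi}(a|s)$ is the fraction of the $n_s$ samples with first coordinate $s$ that have second coordinate $a$ (and is arbitrary, say, when $n_s=0$, a case that contributes $0$ once we also weight by $\widehat\rho$ — but note the outer expectation is under the \emph{true} $\rho$, so I should be a bit careful and instead bound $\dtv$ by its trivial upper bound $1$ on the event $n_s = 0$). Conditioning on the counts $(n_s)_{s\in\Sset}$, for each fixed $s$ the $n_s$ action samples are i.i.d.\ from $\pi(s)$, so by Lemma~\ref{lem:empirical-tv} (applied with $k = |\Aset|$ and $\rho = \pi(s)$) we get $\E[\dtv(\pi(s)\|\widehat\pi(s)) \mid n_s] \le \frac{1}{2}\sqrt{|\Aset|/n_s}$ when $n_s \ge 1$.

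The key steps, in order: (i) Fix the per-state counts $n_s \sim \mathrm{Binomial}(n,\rho(s))$ (jointly multinomial); (ii) for each $s$, apply Lemma~\ref{lem:empirical-tv} conditionally to bound $\E[\dtv(\pi(s)\|\widehat\pi(s)) \mid n_s \ge 1] \le \frac12\sqrt{|\Aset|/n_s}$; (iii) take the expectation over $s\sim\rho$, giving $\E_{\Dset}\E_{s\sim\rho}[\dtv(\pi(s)\|\widehat\pi(s))] \le \sum_s \rho(s)\,\E_{n_s}\!\big[\tfrac12\sqrt{|\Aset|/n_s}\,\mathbbm{1}[n_s\ge1] + \mathbbm{1}[n_s = 0]\big]$; (iv) bound $\E_{n_s}[\min(1, \sqrt{|\Aset|/n_s})]$ — or more simply $\E_{n_s}[1/\sqrt{\max(n_s,1)}]$ — by something like $\sqrt{|\Aset|/(n\rho(s))}$ up to constants, using that $n_s$ concentrates around $n\rho(s)$ (e.g.\ $\E[1/\sqrt{\max(n_s,1)}] \le \sqrt{2/(n\rho(s))}$ is a standard binomial moment bound, and the $n_s=0$ term is exponentially small); (v) sum over $s$: $\sum_s \rho(s)\cdot\sqrt{|\Aset|/(n\rho(s))} = \sqrt{|\Aset|/n}\sum_s\sqrt{\rho(s)} \le \sqrt{|\Aset|/n}\cdot\sqrt{|\Sset|} = \sqrt{|\Sset||\Aset|/n}$, where the last inequality is Cauchy--Schwarz (concavity of square root), exactly mirroring the second step of Lemma~\ref{lem:empirical-tv}. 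The factors of $\tfrac12$ and the small $n_s=0$ corrections are where the slack in the final constant of $1$ comes from.

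The main obstacle I anticipate is step (iv): controlling $\E[1/\sqrt{\max(n_s,1)}]$ cleanly. Naively one wants $1/\sqrt{\E[n_s]} = 1/\sqrt{n\rho(s)}$, but Jensen points the wrong way for the convex function $x\mapsto 1/\sqrt{x}$, so one needs either a direct binomial computation (bounding $\E[(1+n_s)^{-1/2}]$ via $\E[(1+n_s)^{-1}] = \frac{1-(1-\rho(s))^{n+1}}{(n+1)\rho(s)} \le \frac{1}{(n+1)\rho(s)}$ and then Cauchy--Schwarz $\E[(1+n_s)^{-1/2}] \le \sqrt{\E[(1+n_s)^{-1}]}$) or a concentration argument splitting on whether $n_s \ge n\rho(s)/2$. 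The shift to $1+n_s$ is the standard trick that also absorbs the $n_s=0$ case. Everything else is bookkeeping: linearity of expectation over states, the conditional i.i.d.\ structure granted by multinomial sampling, and the final Cauchy--Schwarz over $\Sset$.
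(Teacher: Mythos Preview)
Your proposal is correct but takes a genuinely different route from the paper. You condition on the per-state counts $(n_s)_s$, apply Lemma~\ref{lem:empirical-tv} statewise, and then invoke a binomial moment bound $\E[(1+n_s)^{-1/2}]\le ((n+1)\rho(s))^{-1/2}$ before summing with Cauchy--Schwarz over $\Sset$. The paper instead avoids the conditional-count machinery entirely by an algebraic decomposition: writing $\pi(a|s)=x(s,a)/\rho(s)$ and $\widehat\pi(a|s)=\widehat x(s,a)/\widehat\rho(s)$ and inserting $\widehat x(s,a)/\rho(s)$ via the triangle inequality, it shows the deterministic bound
\[
\E_{s\sim\rho}[\dtv(\pi(s)\|\widehat\pi(s))] \;\le\; \dtv(\rho\|\widehat\rho) + \dtv(x\|\widehat x),
\]
and then applies Lemma~\ref{lem:empirical-tv} \emph{twice}, once on $\Sset$ and once on $\Sset\times\Aset$, yielding $\tfrac12\sqrt{|\Sset|/n}+\tfrac12\sqrt{|\Sset||\Aset|/n}\le\sqrt{|\Sset||\Aset|/n}$. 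The paper's argument is slicker: it sidesteps your step~(iv) (the only genuinely technical part of your plan) and delivers the constant $1$ without any bookkeeping about $n_s=0$ or $\sqrt 2$ factors. Your approach, on the other hand, is the more ``obvious'' one and has the advantage of localizing the error state by state, which would adapt more readily to settings with state-dependent action spaces $\Aset_s$ or non-uniform per-state complexity.
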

\begin{proof}
Let $\widehat{x}$ be the empirical estimate of $x$ in $\Dset$. We have,
\begin{align}
    \E_{s\sim\rho}[\dtv(\pi(s)\|\widehat{\pi}(s))] &= \frac{1}{2}\sum_{s,a} \rho(s)\cdot |\pi(a|s) - \widehat{\pi}(a|s)| \\
    &=\frac{1}{2}\sum_{s,a} \rho(s)\cdot \left|\frac{x(s,a)}{\rho(s)} - \frac{\widehat{x}(s,a)}{\widehat{\rho}(s)}\right| \\
    &\le \frac{1}{2}\sum_{s,a} \rho(s)\cdot \left|\frac{\widehat{x}(s,a)}{\rho(s)} - \frac{\widehat{x}(s,a)}{\widehat{\rho}(s)}\right| + \frac{1}{2}\sum_{s,a} \rho(s)\cdot \left|\frac{\widehat{x}(s,a)}{\rho(s)} - \frac{x(s,a)}{\rho(s)}\right| \\
    &= \frac{1}{2}\sum_{s,a} \rho(s)\cdot \left|\frac{\widehat{x}(s,a)}{\rho(s)} - \frac{\widehat{x}(s,a)}{\widehat{\rho}(s)}\right| + \dtv(x\|\widehat{x}) \\
    &= \frac{1}{2}\sum_{s} \rho(s)\cdot\left|\frac{1}{\rho(s)}-\frac{1}{\widehat{\rho}(s)}\right| \left(\sum_{a} \widehat{x}(s,a)\right) + \dtv(x\|\widehat{x}) \\
    &= \frac{1}{2}\sum_{s} \rho(s)\cdot\left|\frac{1}{\rho(s)}-\frac{1}{\widehat{\rho}(s)}\right| \cdot\widehat{\rho}(s) + \dtv(x\|\widehat{x}) \\
    &=\dtv(\rho\|\widehat{\rho}) + \dtv(x\|\widehat{x}).
\end{align}
Finally, the bound in the lemma is achieved by application of Lemma~\ref{lem:empirical-tv} to each of the TV divergences.
\end{proof}

To prove Theorem~\ref{thm:sample}, we first rewrite Theorem~\ref{thm:tabular} as
\begin{equation}
\Diff(\pirep,\pitarget) \leq
(\ref{eq:tabular-rep})(\phi) + (\ref{eq:tabular-dec})(\phi)
+ C_3\cdot\E_{s\sim \visittarget}[\dtv(\pi_{*,\Zset}(s)\|\pirep(s))],
\end{equation}
where (\ref{eq:tabular-rep}) and (\ref{eq:tabular-dec}) are the first two terms in the bound of Theorem~\ref{thm:tabular}, and $C_3=\frac{\gamma}{1-\gamma}$.

The result in Theorem~\ref{thm:sample} is then derived by setting $\phi = \phi_{\sopt}$ and $\pirep\defeq\pi_{\sopt,\Zset}$ and using the result of Lemma~\ref{lem:tv-bc}.

\subsection{Proof of Theorem~\ref{thm:linear}}
\begin{proof}
The gradient term in Theorem~\ref{thm:linear} with respect to a specific column $\theta_s$ of $\theta$ may be expressed as
\begin{align}
    &\frac{\partial}{\partial\theta_s} \E_{\tilde{s}\sim d^{\pi}, a\sim\pi(\tilde{s})}[(\theta_{\tilde{s}} - \phi(\tilde{s},a))^2] \nonumber \\
    &= -2\E_{a\sim\pi(s)}[\visitpi(s)\phi(s,a)] + 2\visitpi(s)\theta_s \nonumber \\
    &= -2\E_{a\sim\pi(s)}[\visitpi(s)\phi(s,a)] + 2\E_{z=\theta_s}[\visitpi(s)\cdot z],
\end{align}
 and so,
 \begin{align}
    &w(s')^\top \frac{\partial}{\partial\theta_s} \E_{\tilde{s}\sim d^{\pi}, a\sim\pi(\tilde{s})}[(\theta_{\tilde{s}} - \phi(\tilde{s},a))^2] \nonumber\\ 
    &= -2\E_{a\sim\pi(s)}[\visitpi(s)\Tmodel(s'|s,a)] + 2\E_{z=\theta_s}[\visitpi(s)w(s')^\top z].
\end{align}
Summing over $s\in\Sset$, we have:
\begin{align}
 \sum_{s\in\Sset}w(s')^\top \frac{\partial}{\partial\theta_s} \E_{\tilde{s}\sim d^{\pi}, a\sim\pi(\tilde{s})}[(\theta_{\tilde{s}} - \phi(\tilde{s},a))^2]\nonumber\\ = 2\E_{s\sim\visitpi,a\sim\pi(s),z=\theta_s}[-\Tmodel(s'|s,a)+\Trep(s'|s,z)]
\end{align}

Thus, we have:
\begin{align}\label{eq:linear-model}
    \err_{d^{\pi}}(\pi,\pilrep,\Tmodel) &=\frac{1}{2}\sum_{s'\in\Sset} \left| \E_{s\sim\visitpi, a\sim\pi(s),z=\theta_s}[-\Tmodel(s'|s,a) + \Trep(s'|s, z)] \right| \nonumber\\
    &=\frac{1}{4}\sum_{s'\in\Sset}\left|\sum_{s\in\Sset}w(s')^\top \frac{\partial}{\partial\theta_s} \E_{\tilde{s}\sim d^{\pi}, a\sim\pi(\tilde{s})}[(\theta_{\tilde{s}} - \phi(\tilde{s},a))^2] \right| \nonumber\\
    &\le \frac{1}{4}|S|\|w\|_\infty \cdot \left\|\frac{\partial}{\partial\theta} \E_{s\sim d^{\pi}, a\sim\pi(s)}[(\theta_s - \phi(s,a))^2]
    \right\|_1.
\end{align}

Then by combining Lemmas~\ref{lem:performance},~\ref{lem:model1},~\ref{lem:off-policy}, and apply~\Eqref{eq:linear-model} (as opposed to Lemma~\ref{lem:bottleneck} as in the tabular case), we arrive at the desired bound in Theorem~\ref{thm:linear}.
\end{proof}
 
\section{Experiment Details}
\label{app:exp}
\subsection{Architecture}
We parametrize $\phi$ as a two-hidden layer fully connected neural network with $256$ units per layer. A Swish~\citep{ramachandran2017searching} activation function is applied to the output of each hidden layer. We use embedding size $64$ for AntMaze and $256$ for Ant and all DeepMind Control Suite (DMC) tasks after sweeping values of $64$, $256$, and $512$, though we found \method to be relatively robust to the latent dimension size as long as it is not too small (i.e., $\ge64$). The latent skills in temporal skill extraction require a much smaller dimension size, e.g., $8$ or $10$ as reported by \citet{ajay2020opal,pertsch2021guided}. We tried increasing the latent skill size for these work during evaluation, but found the reported value $8$ to work the best. We additionally experimented with different extend of skill extraction, but found the previously reported $t=10$ to also work the best. We implement the trajectory encoder in OPAL, SkiLD, and SPiRL using a bidirectional LSTM with hidden dimension $256$. We use $\beta = 0.1$ for the KL regularization term in the $\beta$ VAE of OPAL (as reported). We also use $0.1$ as the weight for SPiRL and SkiLD's KL divergence terms.

\subsection{Training and Evaluation}
During pretraining, we use the Adam optimizer with learning rate $0.0003$ for $200$k iterations with batch size $256$ for all methods that require pretraining. During downstream behavioral cloning, learned action representations are fixed, but the action decoder is fine-tuned on the expert data as suggested by~\citet{ajay2020opal}. Behavioral cloning for all methods including vanilla BC is trained with learning rate $0.0001$ for $1$M iterations. We experimented with learning rate decay of downstream BC by a factor of $3$ at the $200$k boundary for all methods. We found that when the expert sample size is small, decaying learning rate can prevent overfitting for all methods. The reported results are with learning rate decay on AntMaze and without learning rate decay on other environments for all methods. During the downstream behavioral cloning stage, we evaluate the latent policy combined with the action decoder every $10$k steps by executing $\pidec\circ\pirep$ in the environment for $10$ episodes and compute the average total return. Each method is run with $4$ seeds where each seed corresponds to one set of action representations and downstream imitation learning result on that set of representations. We report the mean and standard error for all methods in the bar and line figures.

\subsection{Modification to SkiLD and SPiRL}
Since SkiLD~\citep{pertsch2021guided} and SPiRL~\citep{pertsch2020accelerating} are originally designed for RL as opposed to imitation learning, we replace the downstream RL algorithms of SkiLD and SPiRL by behavioral cloning with regularization (but keep skill extraction the same as the original methods). Specifically, for SkILD, we apply a KL regularization term between the latent policy and the learned skill prior in the suboptimal offline dataset during pretraining, and another KL regularization term between the latent policy and a learn ``skill posterior'' on the expert data as done in the original paper during downstream behavioral cloning. We do not need to train the binary classifier that SkiLD trains to decide which regularizer to apply because we know which set of actions are expert versus suboptimal in the imitation learning setting. For SPiRL, we apply the KL divergence between latent policy and skill prior extracted from offline data (i.e., using the red term in Algorithm 1 of~\citet{pertsch2020accelerating}) as an additional term to latent behavioral cloning.

\subsection{Dataset Details}
\paragraph{AntMaze.} For the expert data in AntMaze, we use the goal-reaching expert policies trained by~\citet{ajay2020opal} (expert means that the agent is trained to navigate from the one corner of the maze to the opposite corner) to collect $n=10$ trajectories. For the suboptimal data in AntMaze, we use the full D4RL datasets \texttt{antmaze-large-diverse-v0}, \texttt{antmaze-medium-play-v0}, \texttt{antmaze-medium-diverse-v0}, and \texttt{antmaze-medium-play-v0}.

\paragraph{Ant.} For the expert data in Ant, we use a small set of expert trajectories selected by taking either the first $10$k or $25$k transitions from \texttt{ant-expert-v0} in D4RL, corresponding to about $10$ and $25$ expert trajectories, respectively. For the suboptimal data in Ant, we use the full D4RL datasets \texttt{ant-medium-v0}, \texttt{ant-medium-replay-v0}, and \texttt{ant-random-v0}.

\paragraph{RL Unplugged.}
For DeepMind Control Suite~\citep{tassa2018deepmind} set of tasks, we use the RL Unplugged~\citep{gulcehre2020rl} dataset. For the expert data, we take $\frac{1}{10}$ of the trajectories whose episodic reward is among the top $20\%$ of the open source RL Unplugged datasets following the setup in~\citet{zolna2020offline}. For the suboptimal data, we use the bottom $80\%$ of the RL Unplugged dataset. Table~\ref{tab:rlu} records the total number of trajectories available in RL Unplugged for each task ($80\%$ of which are used as suboptimal data), and the number of expert trajectories used in our evaluation.

\begin{table}[h]
    \centering
    \begin{tabular}{l|r|r}\toprule
         Task & \# Total & \# $\Demos$ \\\hline
         cartpole-swingup & $40$ & $2$ \\
         cheetah-run & $300$ & $3$ \\
         fish-swim & $200$ & $1$ \\
         humanoid-run & $3000$ & $53$ \\
         walker-stand & $200$ & $4$ \\
         walker-walk & $200$ & $6$ \\\bottomrule
    \end{tabular}
    \caption{Total number of trajectories from RL Unplugged~\citep{gulcehre2020rl} locomotion tasks used to train CRR~\citep{wang2020critic} and the number of expert trajectories used to train \method. The bottom $80\%$ of \# Total is used to learn action representations by \method.}
    \label{tab:rlu}
\end{table}
\newpage
\section{Additional Empirical Restuls}
\label{app:results}

\begin{figure}[h]
\centering
 \includegraphics[width=\linewidth]{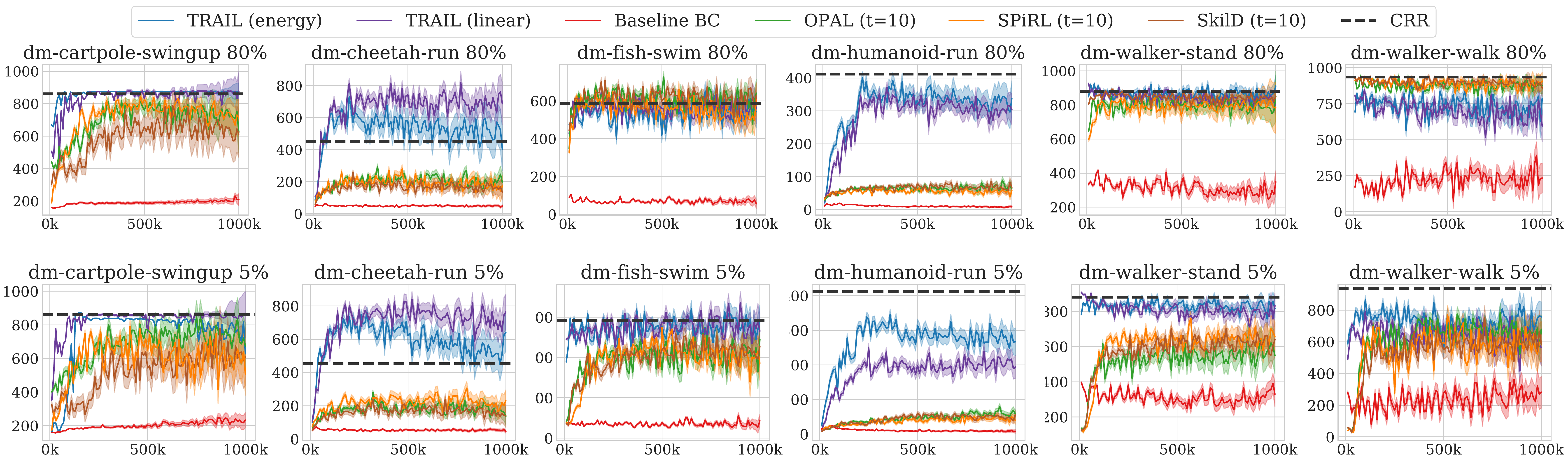}
 \caption{Average task rewards (over $4$ seeds) of \method EBM (Theorem~\ref{thm:tabular}), \method linear (Theorem~\ref{thm:linear}), and OPAL, SkiLD, SPiRL trained on the bottom $80\%$ (top) and bottom $5\%$ (bottom) of the RL Unplugged datasets followed by behavioral cloning in the latent action space. Baseline BC achieves low rewards due to the small expert sample size. Dotted lines denote the performance of CRR~\citep{wang2020critic} trained on the full dataset with reward labels.}
 \label{fig:rlu5}
\end{figure}
\end{document}